\newcommand{\USETIKZ}{}
\xpretocmd{\algorithmic}{}{}{}
\def\IfEmptyTF#1{%
  \if\relax\detokenize{#1}\relax
    \expandafter\@firstoftwo
  \else
    \expandafter\@secondoftwo
  \fi}
\newcolumntype{d}[1]{D{.}{.}{#1}}
\newcommand\mc[1]{\multicolumn{1}{c}{#1}} 
\newcommand \fixed {\diamond}
\newcommand \projective {\mathrm{pr}}
\newcommand \planar {\mathrm{pl}}
\NewDocumentCommand \geprob{m m m}			{\mathbb{P}_{#1}^{#2}\left(#3\right)}
\NewDocumentCommand \eprob{m m}				{\geprob{#1}{}{#2}}
\NewDocumentCommand \eprobfixed{m m m}		{\geprob{#1}{\fixed}{#2 \mid #3}}
\NewDocumentCommand \prob{m}				{\eprob{}{#1}}
\NewDocumentCommand \rprobfixed{m m}		{\eprobfixed{\projective}{#1}{#2}}
\NewDocumentCommand \lprob{m}				{\eprob{\planar}{#1}}
\NewDocumentCommand \geexpe{m m m}			{\mathbb{E}_{#1}^{#2}\left[#3\right]}
\NewDocumentCommand \geexpeapprox{m m m}	{\tilde{\mathbb{E}}_{#1}^{#2}\left[#3\right]}
\NewDocumentCommand \eexpe{m m}				{\geexpe{#1}{}{#2}}
\NewDocumentCommand \eexpeapprox{m m}		{\geexpeapprox{#1}{}{#2}}
\NewDocumentCommand \econdexpe{m m m}		{\geexpe{#1}{}{#2 \mid #3}}
\NewDocumentCommand \eexpefixed{m m m}		{\geexpe{#1}{\fixed}{#2 \mid #3}}
\NewDocumentCommand \expe{m}				{\eexpe{}{#1}}
\NewDocumentCommand \expefixed{m} 			{\geexpe{}{\fixed}{#1}}
\NewDocumentCommand \rexpe{m}				{\eexpe{\projective}{#1}}
\NewDocumentCommand \rexpeapprox{m}  		{\eexpeapprox{\projective}{#1}}
\NewDocumentCommand \rcondexpe{m m}			{\econdexpe{\projective}{#1}{#2}}
\NewDocumentCommand \rexpefixed{m m}		{\eexpefixed{\projective}{#1}{#2}}
\NewDocumentCommand \rexper{m m}			{\econdexpe{\projective}{#1}{#2}}
\NewDocumentCommand \rexperfixed{m m}		{\eexpefixed{\projective}{#1}{#2}}
\NewDocumentCommand \lexpe{m}				{\eexpe{\planar}{#1}}
\NewDocumentCommand \lexpeapprox{m}			{\eexpeapprox{\planar}{#1}}
\NewDocumentCommand \lexpefixed{m m}		{\eexpefixed{\planar}{#1}{#2}}
\NewDocumentCommand \lcondexpe{m m}			{\econdexpe{\planar}{#1}{#2}}
\NewDocumentCommand \var{m}  				{\mathbb{V}\left[#1\right]}
\NewDocumentCommand \bigO {m} {O(#1)}
\newtheoremstyle{theoremstyle}
    {15pt} 
    {} 
    {\itshape} 
    {} 
    {\bfseries} 
    {.} 
    {.5em} 
    {} 
\theoremstyle{theoremstyle}
\newtheorem{lemma}{Lemma}[section]
\newtheorem{theorem}[lemma]{Theorem}
\newtheorem{proposition}{Proposition}
\newtheorem{corollary}[proposition]{Corollary}
\NewDocumentCommand \Root {} {r}
\NewDocumentCommand \Ftree {O{T}} {#1}
\NewDocumentCommand \Rtree {O{\Root} O{\Ftree}} {#2^{#1}}
\NewDocumentCommand \SubRtree {m O{\Root} O{\Ftree}} {#3_{#1}^{#2}}
\NewDocumentCommand \Nvert {m m} {s_{#1}(#2)}
\NewDocumentCommand \degreesymbol { } {d}
\NewDocumentCommand \degree {m}				{\degreesymbol(#1)}
\NewDocumentCommand \outdegree {m O{\Root}}		{\degreesymbol_{#2}(#1)}
\NewDocumentCommand \neighbours {} {\Gamma}
\NewDocumentCommand \neighs {m}				{\neighbours(#1)}
\NewDocumentCommand \outneighs {m O{\Root}}		{\neighbours_{#2}(#1)}
\NewDocumentCommand \closest {m O{\Root}}	{c_{#2}(#1)}
\NewDocumentCommand \farthest {m O{\Root}}	{f_{#2}(#1)}
\NewDocumentCommand \arr {} {\pi}
\NewDocumentCommand \lenedge {m O{\arr}} {
	\IfEmptyTF{#2}
	{\delta_{#1}}
	{\delta_{#1}(#2)}
}
\NewDocumentCommand \lenedgep {m O{\arr}} { 
	\IfEmptyTF{#2}
	{\delta_{#1}^*}
	{\delta_{#1}^*(#2)}
}
\NewDocumentCommand \lengthsegment { m m O{\arr} } {
	\ifthenelse{\isempty{#3}}
	{\varphi_{#1}^{(#2)}}
	{\varphi_{#1,#3}(#2)}
}
\NewDocumentCommand \SetIndEdges {O{\Ftree}} {Q(#1)}
\NewDocumentCommand \D {m O{\arr}} {
	\IfEmptyTF{#1}
	{\IfEmptyTF{#2}
	{D}
	{D_{#2}}}
	{\IfEmptyTF{#2}
	{D(#1)}
	{D_{#2}(#1)}}
}
\NewDocumentCommand \Dp {m O{\arr}} {
	\IfEmptyTF{#2}
	{D^*(#1)}
	{D_{#2}^*(#1)}
}
\NewDocumentCommand \Cr {m O{\arr}} {C_{#2}(#1)}
\NewDocumentCommand \anchor { m O{\arr} } {
	\IfEmptyTF{#2}
	{\alpha_{#1}}
	{\alpha_{#1}(#2)}
}
\NewDocumentCommand \coanchor { m O{\arr} } {
	\IfEmptyTF{#2}
	{\beta_{#1}}
	{\beta_{#1}(#2)}
}
\NewDocumentCommand \gminD {m m}		{m_{#1}[D(#2)]}
\NewDocumentCommand \minDPr {O{\Rtree}}	{\gminD{\projective}{#1}}
\NewDocumentCommand \minDPl {O{\Ftree}}	{\gminD{\planar}{#1}}
\NewDocumentCommand \minD {O{\Ftree}}	{\gminD{}{#1}}
\NewDocumentCommand \Unc {O{\Ftree}}	{\mathbf{P}(#1)}
\NewDocumentCommand \NUnc {O{\Ftree}}	{\mathbf{N}(#1)}
\NewDocumentCommand \Proj {O{\Rtree}}		{\mathbf{P}_{\mathbf{pr}}(#1)}
\NewDocumentCommand \ProjFixed {m}			{\mathbf{P}^{\fixed}_{\mathbf{pr}}(\Rtree[#1])}
\NewDocumentCommand \NProj {O{\Rtree}}		{\mathbf{N}_{\mathbf{pr}}(#1)}
\NewDocumentCommand \NProjFixed {m}			{\mathbf{N}^{\fixed}_{\mathbf{pr}}(\Rtree[#1])}
\NewDocumentCommand \DProj {O{\Rtree}}		{\mathbf{D}_{\mathbf{pr}}(#1)}
\NewDocumentCommand \DProjFixed {O{\Rtree}}	{\mathbf{D}^{\fixed}_{\mathbf{pr}}(#1)}
\NewDocumentCommand \Plan {O{\Ftree}}	{\mathbf{P_{pl}}(#1)}
\NewDocumentCommand \NPlan {O{\Ftree}}	{\mathbf{N_{pl}}(#1)}
\NewDocumentCommand \DPlan {O{\Ftree}}	{\mathbf{D_{pl}}(#1)}
\NewDocumentCommand \Vd {m}		{\lenedge{#1}[]}
\NewDocumentCommand \VD {m}		{\D{#1}[]}
\NewDocumentCommand \VCr {m}		{\Cr{#1}[]}
\NewDocumentCommand \Vanchor {m}			{\anchor{#1}[]}
\NewDocumentCommand \Vcoanchor {m}			{\coanchor{#1}[]}
\NewDocumentCommand \Vlengthsegment {m m} 	{\lengthsegment{#1}{#2}[]}
\NewDocumentCommand \ExpeDUnc {O{\Ftree}}	{\expe{\VD{#1}}}
\NewDocumentCommand \ZExpeDUnc {O{\Ftree}}	{\Zexpe{\VD{#1}}}
\NewDocumentCommand \VarDUnc {O{\Ftree}}	{\var{\VD{#1}}}
\NewDocumentCommand \ExpeDProj {O{\Rtree}}			{\rexpe{\VD{#1}}}
\NewDocumentCommand \ExpeDProjApprox {O{\Rtree}}	{\rexpeapprox{\VD{#1}}}
\NewDocumentCommand \ExpeDProjFixed {m O{\Rtree}}	{\rexpefixed{\VD{#2}}{#1}}
\NewDocumentCommand \ExpeDProjFixedNoVertex {O{\Rtree}}	{\geexpe{\projective}{\fixed}{\VD{#1}}}
\NewDocumentCommand \condExpeDProj {m O{\Rtree}}	{\rcondexpe{\VD{#2}}{#1}}
\NewDocumentCommand \ExpeDPlan {O{\Ftree}}			{\lexpe{\VD{#1}}}
\NewDocumentCommand \ExpeDPlanApprox {O{\Ftree}}	{\lexpeapprox{\VD{#1}}}
\NewDocumentCommand \ExpeDPlanFixed {O{\Ftree}}		{\lexpefixed{\VD{#1}}}
\NewDocumentCommand \condExpeDPlan {m O{\Ftree}}	{\lcondexpe{\VD{#2}}{#1}}
\NewDocumentCommand \ExpeDk {O{k} O{\Ftree}}		{\eexpe{\le #1}{\VD{#2}}}
\NewDocumentCommand \ExpeDgek {O{k} O{\Ftree}}		{\eexpe{\ge #1}{\VD{#2}}}
\NewDocumentCommand \ExpeCUnc {O{\Ftree}}	{\expe{\VCr{#1}}}
\NewDocumentCommand \VarCUnc {O{\Ftree}}	{\var{\VCr{#1}}}
\NewDocumentCommand \deftree {} {t}
\NewDocumentCommand \deffilter {} {f}
\NewDocumentCommand \defomega {} {\omega}
\NewDocumentCommand \SymbolOmegaUnc {} {\Omega}
\NewDocumentCommand \OmegaUnc {O{\Ftree} O{\arr} O{\SymbolOmegaUnc}} {
    #3
	\IfEmptyTF{#1} {
	    \IfEmptyTF{#2} {
	    }{
	        \GenericError{}{Error: first parameter cannot be empty while the second isn't. Second parameter: '#2'.}{\@gobble}{}
	    }
	}{
	    \IfEmptyTF{#2} {
	        (#1)
	    }{
	        (#1;#2)
	    }
	}
}
\NewDocumentCommand \SymbolOmegaPlan {} {\Omega_{\planar}}
\NewDocumentCommand \OmegaPlan {O{\Ftree} O{\arr} O{\SymbolOmegaPlan}} {
    #3
	\IfEmptyTF{#1} {
	    \IfEmptyTF{#2} {
	    }{
	        \GenericError{}{Error: first parameter cannot be empty while the second isn't. Second parameter: '#2'.}{\@gobble}{}
	    }
	}{
	    \IfEmptyTF{#2} {
	        (#1)
	    }{
	        (#1;#2)
	    }
	}
}
\NewDocumentCommand \SymbolOmegaProj {} {\Omega_{\projective}}
\NewDocumentCommand \OmegaProj {O{\Rtree} O{\arr} O{\SymbolOmegaProj}} {
    #3
	\IfEmptyTF{#1} {
	    \IfEmptyTF{#2} {
	    }{
	        \text{(Error)}
	    }
	}{
	    \IfEmptyTF{#2} {
	        (#1)
	    }{
	        (#1;#2)
	    }
	}
}
\NewDocumentCommand \SymbolScoreOmega {O{\mu}} {{#1}_{\Omega}}
\NewDocumentCommand \ScoreOmega {m m m O{\mu}} {\SymbolScoreOmega[#4](#1,#2,#3)}
\NewDocumentCommand \ScoreOmegaF {O{\defomega} O{\deftree} O{\deffilter} O{\mu}} {\ScoreOmega{#1}{#2}{#3}[#4]}
\NewDocumentCommand \SymboldUnc {} {\bar{d}}
\NewDocumentCommand \dUnc {O{\Ftree} O{\arr} O{\SymboldUnc}} {
    #3
	\IfEmptyTF{#1} {
	    \IfEmptyTF{#2} {
	    }{
	        \GenericError{}{Error: first parameter cannot be empty while the second isn't. Second parameter: '#2'.}{\@gobble}{}
	    }
	}{
	    \IfEmptyTF{#2} {
	        (#1)
	    }{
	        (#1;#2)
	    }
	}
}
\NewDocumentCommand \SymbolScoreD {O{\mu} O{\SymboldUnc}} {{#1}_{\SymboldUnc}}
\NewDocumentCommand \ScoreD {m m O{\mu}} {\SymbolScoreD[#3](#1,#2)}
\NewDocumentCommand \ScoreDF {O{\deftree} O{\deffilter} O{\mu}} {\ScoreD{#1}{#2}[#3]}
\NewDocumentCommand \ExpedUnc {O{\Ftree}} {
	\IfEmptyTF{#1}
	{d}
	{d(#1)}
}
\NewDocumentCommand \ExpedProj {O{\Rtree}} {
	\IfEmptyTF{#1}
	{d_{\projective}}
	{d_{\projective}(#1)}
}
\NewDocumentCommand \ExpedPlan {O{\Ftree}} {
	\ifthenelse{\isempty{#1}}
	{d_{\planar}}
	{d_{\planar}(#1)}
}
\tikzstyle{ipe stylesheet} = [
\definecolor{red}{rgb}{1,0,0}
\definecolor{blue}{rgb}{0,0,1}
\definecolor{green}{rgb}{0,1,0}
\definecolor{yellow}{rgb}{1,1,0}
\definecolor{orange}{rgb}{1,0.647,0}
\definecolor{gold}{rgb}{1,0.843,0}
\definecolor{purple}{rgb}{0.627,0.125,0.941}
\definecolor{gray}{rgb}{0.745,0.745,0.745}
\definecolor{brown}{rgb}{0.647,0.165,0.165}
\definecolor{navy}{rgb}{0,0,0.502}
\definecolor{pink}{rgb}{1,0.753,0.796}
\definecolor{seagreen}{rgb}{0.18,0.545,0.341}
\definecolor{turquoise}{rgb}{0.251,0.878,0.816}
\definecolor{violet}{rgb}{0.933,0.51,0.933}
\definecolor{darkblue}{rgb}{0,0,0.545}
\definecolor{darkcyan}{rgb}{0,0.545,0.545}
\definecolor{darkgray}{rgb}{0.663,0.663,0.663}
\definecolor{darkgreen}{rgb}{0,0.392,0}
\definecolor{darkmagenta}{rgb}{0.545,0,0.545}
\definecolor{darkorange}{rgb}{1,0.549,0}
\definecolor{darkred}{rgb}{0.545,0,0}
\definecolor{lightblue}{rgb}{0.678,0.847,0.902}
\definecolor{lightcyan}{rgb}{0.878,1,1}
\definecolor{lightgray}{rgb}{0.827,0.827,0.827}
\definecolor{lightgreen}{rgb}{0.565,0.933,0.565}
\definecolor{lightyellow}{rgb}{1,1,0.878}
\definecolor{black}{rgb}{0,0,0}
\definecolor{white}{rgb}{1,1,1}
\begin{document}
\allowdisplaybreaks

\title{The expected sum of edge lengths in planar linearizations of trees. Theory and applications.}


\author[1]{Llu\'is Alemany-Puig}
\author[1]{Ramon Ferrer-i-Cancho}
\affil[1]{Universitat Polit\`ecnica de Catalunya (UPC), Barcelona, Catalonia, Spain.}





\maketitle


\begin{abstract}
Dependency trees have proven to be a very successful model to represent the syntactic structure of sentences of human languages. In these structures, vertices are words and edges connect syntactically-dependent words. The tendency of these dependencies to be short has been demonstrated using random baselines for the sum of the lengths of the edges or its variants. A ubiquitous baseline is the expected sum in projective orderings (wherein edges do not cross and the root word of the sentence is not covered by any edge), that can be computed in time $\bigO{n}$. Here we focus on a weaker formal constraint, namely planarity. In the theoretical domain, we present a characterization of planarity that, given a sentence, yields either the number of planar permutations or an efficient algorithm to generate uniformly random planar permutations of the words. We also show the relationship between the expected sum in planar arrangements and the expected sum in projective arrangements. In the domain of applications, we derive a $\bigO{n}$-time algorithm to calculate the expected value of the sum of edge lengths. We also apply this research to a parallel corpus and find that the gap between actual dependency distance and the random baseline reduces as the strength of the formal constraint on dependency structures increases, suggesting that formal constraints absorb part of the dependency distance minimization effect. Our research paves the way for replicating past research on dependency distance minimization using random planar linearizations as random baseline.
\end{abstract}

\section{Introduction}
\label{sec:introduction}

A successful representation of the structure of a sentence in natural language is a (labeled) graph indicating the syntactic relationships between words together with the encoding of the words' order. In such a graph, the edge labels indicate the type of syntactic relationship between the words. Such combination of graph and linear ordering, as in Figure \ref{fig:example_syntactic_dependency_graphs}, is known as syntactic dependency structure \citep{Nivre2006a}. When the graph is (1) {\em well-formed}, namely, the graph is weakly connected, (2) is {\em acyclic}, that is, there are no cycles in the graph, (3) is {\em single-headed}, that is, every node has a single head (except for the root node), and (4) there is only one root node (one node with no head) in the graph, then it is called a syntactic dependency tree \citep{Nivre2006a}. There exist {\em formal constraints} that are often imposed on dependency structures. One such constraint is {\em projectivity}: a dependency structure is projective if, for every vertex $v$, all vertices reachable from $v$ in the underlying graph form a continuous substring within the sentence \citep{Kuhlmann2006a} and the root word of the sentence (the root of the underlying syntactic dependency structure) is never covered (as in Figure \ref{fig:example_syntactic_dependency_graphs}(a)). Another formal constraint is {\em planarity}, a generalization of projectivity where the root is allowed to be covered by one or more of the edges (as in Figure \ref{fig:example_syntactic_dependency_graphs}(b)). Figure \ref{fig:example_syntactic_dependency_graphs}(c) shows a sentence that is neither projective nor planar.

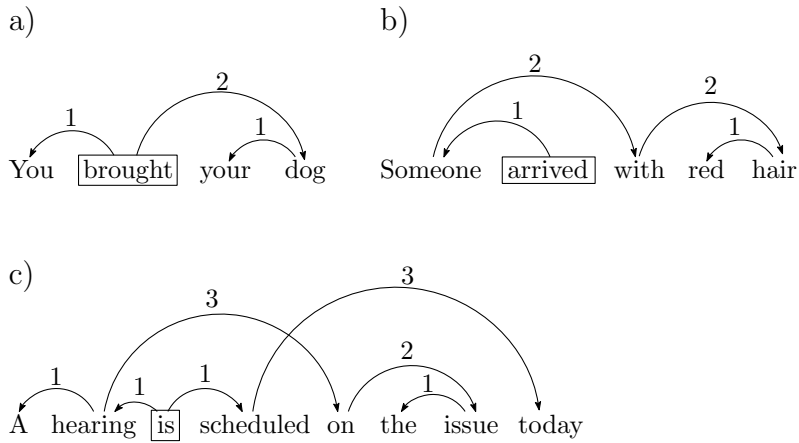
\begin{figure}
	\centering
\scalebox{1}{
\begin{tikzpicture}[ipe stylesheet]
  \node[ipe node, font=\large]
     at (40, 328) {a)};
  \node[ipe node, font=\large]
     at (180, 328) {b)};
  \node[ipe node]
     at (40, 176) {A};
  \node[ipe node]
     at (56, 176) {hearing};
  \node[ipe node]
     at (96, 176) {is};
  \node[ipe node]
     at (112, 176) {scheduled};
  \node[ipe node]
     at (160, 176) {on};
  \node[ipe node]
     at (180, 176) {the};
  \node[ipe node]
     at (204, 176) {issue};
  \node[ipe node]
     at (232, 176) {today};
  \draw[-{ipe pointed[ipe arrow tiny]}]
    (72, 184)
     arc[start angle=26.5651, end angle=153.4349, radius=15.6525];
  \draw[-{ipe pointed[ipe arrow tiny]}]
    (96, 184)
     arc[start angle=45, end angle=135, radius=11.3137];
  \draw[-{ipe pointed[ipe arrow tiny]}]
    (100, 184)
     arc[start angle=-153.4349, end angle=-26.5651, x radius=15.6525, y radius=-15.6525];
  \draw[-{ipe pointed[ipe arrow tiny]}]
    (76, 184)
     arc[start angle=-169.6952, end angle=-10.3048, x radius=44.7214, y radius=-44.7214];
  \draw[-{ipe pointed[ipe arrow tiny]}]
    (132, 184)
     arc[start angle=-171.8931, end angle=-8.7462, x radius=54.5451, y radius=-54.5451];
  \draw[-{ipe pointed[ipe arrow tiny]}]
    (168, 184)
     arc[start angle=-161.5651, end angle=-18.4349, x radius=25.2982, y radius=-25.2982];
  \draw[-{ipe pointed[ipe arrow tiny]}]
    (212, 184)
     arc[start angle=33.6901, end angle=146.3099, radius=14.4222];
  \node[ipe node]
     at (180, 272) {Someone};
  \node[ipe node]
     at (228, 272) {arrived};
  \node[ipe node]
     at (268, 272) {with};
  \node[ipe node]
     at (296, 272) {red};
  \node[ipe node]
     at (320, 272) {hair};
  \draw[-{ipe pointed[ipe arrow tiny]}]
    (244, 280)
     arc[start angle=21.8014, end angle=158.1986, radius=21.5407];
  \draw[-{ipe pointed[ipe arrow tiny]}]
    (199.9996, 280.0001)
     arc[start angle=-167.4725, end angle=-12.6643, x radius=39.0306, y radius=-39.0306];
  \draw[-{ipe pointed[ipe arrow tiny]}]
    (277.6044, 280.2802)
     arc[start angle=-163.5349, end angle=-18.4349, x radius=28.3178, y radius=-28.3178];
  \draw[-{ipe pointed[ipe arrow tiny]}]
    (328, 280)
     arc[start angle=33.6901, end angle=153.4349, radius=14.4222];
  \node[ipe node]
     at (40, 272) {You};
  \node[ipe node]
     at (68, 272) {brought};
  \node[ipe node]
     at (112, 272) {your};
  \node[ipe node]
     at (144, 272) {dog};
  \draw[-{ipe pointed[ipe arrow tiny]}]
    (79.6371, 280.6591)
     arc[start angle=28.952, end angle=153.4349, radius=17.8796];
  \draw[-{ipe pointed[ipe arrow tiny]}]
    (148, 280)
     arc[start angle=33.6901, end angle=153.4349, radius=14.4222];
  \draw[-{ipe pointed[ipe arrow tiny]}]
    (88.2365, 280.6029)
     arc[start angle=-164.5598, end angle=-12.5296, x radius=32.599, y radius=-32.599];
  \draw
    (226, 280) rectangle (261, 270);
  \draw
    (66.3206, 280.691) rectangle (103.487, 269.297);
  \node[ipe node]
     at (55.43, 194.119) {1};
  \node[ipe node]
     at (86.96, 190.239) {1};
  \node[ipe node]
     at (111.699, 195.09) {1};
  \node[ipe node]
     at (229.385, 294.455) {1};
  \node[ipe node]
     at (311.121, 288.149) {1};
  \node[ipe node]
     at (132.557, 288.118) {1};
  \node[ipe node]
     at (195.618, 191.451) {1};
  \node[ipe node]
     at (187.614, 203.821) {2};
  \node[ipe node]
     at (302.147, 303.671) {2};
  \node[ipe node]
     at (118.005, 305.096) {2};
  \node[ipe node]
     at (61.008, 291.756) {1};
  \node[ipe node]
     at (235.934, 312.16) {2};
  \node[ipe node]
     at (114.124, 223.224) {3};
  \node[ipe node]
     at (187.856, 231.956) {3};
  \node[ipe node, font=\large]
     at (40, 232) {c)};
  \draw
    (93.746, 184.453) rectangle (104.354, 173.013);
\end{tikzpicture}
}
	\caption{Examples of sentences with their syntactic dependency structures; arc labels indicate dependency distance (in words) between linked words. The rectangles denote the root word in each sentence. a) A projective dependency tree (adapted from \citealp{Gross2009a}). b) Planar (but not projective) syntactic dependency structure (adapted from \citealp{Gross2009a}). c) Non-projective and non-planar syntactic dependency structure (adapted from \citealp{Nivre2009a}).}
	\label{fig:example_syntactic_dependency_graphs}
\end{figure}

In this article, we study statistical properties of syntactic dependency structures under the planarity constraint. Such structures are represented in this article as a pair consisting of a (free or rooted) tree and a linear arrangement of its vertices. Free trees are denoted as $\Ftree=(V,E)$, and rooted trees as $\Rtree=(V,E;\Root)$, where $V$ is the set of vertices, $E$ the set of edges, and $\Root\in V$ denotes the root vertex. Unless stated otherwise $n=|V|$, that is, $n$ denotes the number of vertices which is equal to the number of words in the sentence. A linear arrangement $\arr$ (also called {\em embedding}) of a tree is a (bijective) function ($\arr \;:\; V \rightarrow \{1,\dots,n\}$) that maps every vertex $u$ of a tree to a unique position in $\{1,\dots,n\}$, which is denoted by $\arr(u)$.

Projectivity, as well as planarity, can be alternatively defined on linear arrangements using the concept of edge crossing. We say that any two (undirected) edges $\{s,t\}$, $\{u,v\}$ cross if the positions of their vertices interleave. More formally, assume, without loss of generality, that $\arr(s)<\arr(t)$, $\arr(u)<\arr(v)$ and $\arr(s)<\arr(u)$. Then, edges $\{s,t\}$, $\{u,v\}$ cross in the linear ordering defined by $\arr$ if $\arr(s)<\arr(u)<\arr(t)<\arr(v)$.\footnote{ Notice that this notion of crossing does not depend on edge orientation.} We denote the total number of edge crossings in an arrangement $\arr$ as $\Cr{\Ftree}$. Then, an arrangement $\arr$ of a rooted tree $\Rtree$ is {\em planar} if $\Cr{\Rtree}=0$ and is {\em projective} if (a) it is planar and (b) the root of the tree is not covered, that is, there is no edge $\{s,t\}$ such that $\arr(s)<\arr(\Root)<\arr(t)$ or $\arr(t)<\arr(\Root)<\arr(s)$. Planarity is a relaxation of projectivity where the root can be covered \citep{Sleator1993a,Kuhlmann2006a}. Planar arrangements are also known in the literature as {\em one-page book embeddings} \citep{Bernhart1974a}.

In this article, the main object of study is the expectation of the sum of edge lengths (or syntactic dependency distances) in planar arrangements of free trees. The length of an edge connecting two syntactically-related words, also known as dependency distance, is usually\footnote{Another popular definition is $\lenedge{uv}=|\arr(u) - \arr(v)| - 1$ \citep{Liu2017a}.} defined as the number of intervening words between $u$ and $v$ in the sentence plus 1 (Figure \ref{fig:example_syntactic_dependency_graphs}). It is defined mathematically as
\begin{equation*}
\lenedge{uv} = |\arr(u) - \arr(v)|.
\end{equation*}
We define the total sum of edge lengths in $\arr$ as
\begin{equation}
\label{eq:sum_edge_lengths}
\D{\Ftree} = \sum_{uv\in E} \lenedge{uv}.
\end{equation}
Close attention has been paid to this metric in modern linguistic research since its causal relationship with cognitive cost was first put forward, to the best of our knowledge, by \citet{Hudson1995a}. The main causal argument is that the longer the dependency, the greater the memory burden arising from decay of activation and interference \citep{Hudson1995a,Liu2017a}. A number of studies have exposed the general tendency in languages to reduce $D$, the total sum of edge lengths, a reflection of a potentially universal cognitive force known as the Dependency Distance Minimization principle (DDm) \citep{Ferrer2004a,Liu2008a,Futrell2015a,Liu2017a,Ferrer2022a}. As an example of such cognitive cost, consider the sentences in Figures \ref{fig:example_cognitive_cost}(a) and \ref{fig:example_cognitive_cost}(b): it is not surprising that the latter is preferred over the former due to smaller total sum of edge lengths \citep{Morrill2000a}, the former's being $D=18$ and the latter's being $D=12$.

\begin{figure}
	\centering
\scalebox{1}{
\begin{tikzpicture}[ipe stylesheet]
  \node[ipe node]
     at (72, 548) {John};
  \node[ipe node]
     at (100, 548) { gave};
  \node[ipe node]
     at (124, 548) { the};
  \node[ipe node]
     at (144, 548) { painting};
  \node[ipe node]
     at (184, 548) { that};
  \node[ipe node]
     at (208, 548) { Mary};
  \node[ipe node]
     at (236, 548) { hated};
  \node[ipe node]
     at (268, 548) { to};
  \node[ipe node]
     at (284, 548) { Bill};
  \draw[-{ipe pointed[ipe arrow tiny]}]
    (108.0002, 556.0001)
     arc[start angle=23.7856, end angle=156.2161, x radius=13.8867, y radius=9.9179];
  \draw[-{ipe pointed[ipe arrow tiny]}]
    (108, 556)
     arc[start angle=11.7505, end angle=168.2495, x radius=-27.5013, y radius=19.6415];
  \draw[-{ipe pointed[ipe arrow tiny]}]
    (108.0002, 556)
     arc[start angle=3.4897, end angle=176.5102, x radius=-92.0104, y radius=65.7141];
  \draw[-{ipe pointed[ipe arrow tiny]}]
    (159.9997, 555.9999)
     arc[start angle=21.0648, end angle=158.9352, x radius=15.582, y radius=11.1287];
  \draw[-{ipe pointed[ipe arrow tiny]}]
    (164, 556)
     arc[start angle=7.254, end angle=172.746, x radius=-44.355, y radius=31.6785];
  \draw[-{ipe pointed[ipe arrow tiny]}]
    (248.176, 556.0009)
     arc[start angle=11.5049, end angle=168.4947, x radius=28.0863, y radius=20.0593];
  \draw[-{ipe pointed[ipe arrow tiny]}]
    (248.1767, 556.0009)
     arc[start angle=21.4339, end angle=158.5661, x radius=15.3299, y radius=10.9486];
  \draw[-{ipe pointed[ipe arrow tiny]}]
    (287.9999, 556)
     arc[start angle=35.7284, end angle=144.2716, x radius=9.591, y radius=6.8499];
  \node[ipe node]
     at (72, 472) {John};
  \node[ipe node]
     at (100, 472) { gave};
  \node[ipe node]
     at (124, 472) {Bill};
  \node[ipe node]
     at (144, 472) { the};
  \node[ipe node]
     at (164, 472) { painting};
  \node[ipe node]
     at (204, 472) { that};
  \node[ipe node]
     at (228, 472) { Mary};
  \node[ipe node]
     at (256, 472) { hated};
  \draw[-{ipe pointed[ipe arrow tiny]}]
    (109.5474, 480.0008)
     arc[start angle=23.6799, end angle=156.3254, x radius=13.9478, y radius=9.9616];
  \draw[-{ipe pointed[ipe arrow tiny]}]
    (109.5484, 480.0008)
     arc[start angle=26.635, end angle=153.363, x radius=-12.4956, y radius=8.9244];
  \draw[-{ipe pointed[ipe arrow tiny]}]
    (109.5478, 480.001)
     arc[start angle=8.8086, end angle=171.1916, x radius=-36.5827, y radius=26.1275];
  \draw[-{ipe pointed[ipe arrow tiny]}]
    (179.9997, 479.9999)
     arc[start angle=21.0648, end angle=158.9352, x radius=15.582, y radius=11.1287];
  \draw[-{ipe pointed[ipe arrow tiny]}]
    (184, 480)
     arc[start angle=7.254, end angle=172.746, x radius=-44.355, y radius=31.6785];
  \draw[-{ipe pointed[ipe arrow tiny]}]
    (268.176, 480.0009)
     arc[start angle=11.5049, end angle=168.4947, x radius=28.0863, y radius=20.0593];
  \draw[-{ipe pointed[ipe arrow tiny]}]
    (268.1767, 480.0009)
     arc[start angle=21.4339, end angle=158.5661, x radius=15.3299, y radius=10.9486];
  \draw
    (97.783, 556.021)
     -- (97.7541, 544.638)
     -- (122.6359, 544.602)
     -- (122.7894, 555.891)
     -- cycle;
  \draw
    (97.715, 479.967)
     -- (97.754, 468.638)
     -- (122.459, 468.612)
     -- (122.452, 479.967)
     -- cycle;
  \node[ipe node]
     at (72, 624) {a)};
  \node[ipe node]
     at (72, 516) {b)};
  \node[ipe node]
     at (92.527, 564.394) {1};
  \node[ipe node]
     at (135.249, 562.476) {1};
  \node[ipe node]
     at (132.086, 573.416) {2};
  \node[ipe node]
     at (222.769, 563.276) {1};
  \node[ipe node]
     at (206.005, 573.096) {2};
  \node[ipe node]
     at (194.127, 585.143) {3};
  \node[ipe node]
     at (277.648, 560.554) {1};
  \node[ipe node]
     at (154.124, 611.224) {7};
  \node[ipe node]
     at (91.43, 490.119) {1};
  \node[ipe node]
     at (123.43, 486.119) {1};
  \node[ipe node]
     at (155.43, 486.119) {1};
  \node[ipe node]
     at (242.79, 486.44) {1};
  \node[ipe node]
     at (226.974, 496.621) {2};
  \node[ipe node]
     at (207.856, 507.956) {3};
  \node[ipe node]
     at (143.696, 505.394) {3};
\end{tikzpicture}
}
	\caption{Examples of sentences with their syntactic dependency structures; arc labels indicate dependency distance. The rectangles denote the root word in each sentence. Examples adapted from \citet{Morrill2000a}. The sum of edge lengths are $D=18$ for a) and $D=12$ for b).}
	\label{fig:example_cognitive_cost}
\end{figure}
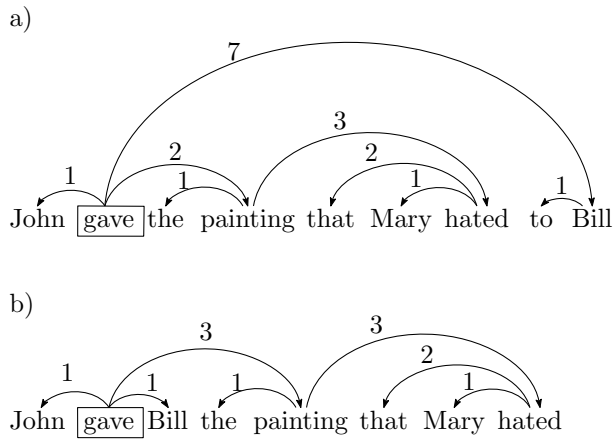

Statistical evidence of the DDm principle has been provided showing that dependency distances are smaller than expected by chance in syntactic dependency treebanks \citep{Ferrer2004a,Liu2008a,Park2009a,Gildea2010a,Futrell2015a,Liu2017a,Ferrer2022a,Kramer2021a}. Typically, the random baseline is defined as a random shuffling of the words of a sentence. To the best of our knowledge, the first known instance of such an approach was done by \citet{Ferrer2004a} who established the DDm principle by comparing the average real $\VD{\Ftree}$ of sentences against its expected value in a uniformly random permutation of their words. More formally, \citet{Ferrer2004a} calculated the expected value of $\VD{\Ftree}$ when the words of the sentence are shuffled uniformly at random (u.a.r.), that is, when all $n!$ permutations equally likely. This value is denoted here as $\ExpeDUnc$. \citet{Ferrer2004a} found that
\begin{equation}
\label{eq:introduction:E_D}
\ExpeDUnc = \frac{n^2 - 1}{3}.
\end{equation}
In spite of the simplicity of Equation \ref{eq:introduction:E_D}, the majority of researchers have used as random baseline the expected sum of edge lengths conditioned to projective arrangements \citep{Temperley2008a,Park2009a,Gildea2010a,Futrell2015a,Kramer2021a} which we denote here as $\ExpeDProj$. However, this baseline has been computed approximately via random sampling of projective arrangements. For these reasons, a formula to calculate the exact value of $\ExpeDProj$ in linear time, was derived by \citet{Alemany2022b}
\begin{equation}
\label{eq:introduction:E_pr_D}
\ExpeDProj
	= \frac{1}{6}\sum_{u\in V} \Nvert{\Root}{u}( 2\outdegree{u} + 1 ) - \frac{1}{6},
\end{equation}
where $\Nvert{\Root}{u}$ denotes the size (in vertices) of the subtree of $\Rtree$ rooted at $u$, and $\outdegree{u}$ is the out-degree of $u$ in $\Rtree$. In spite of its extensive use, the projective random baseline has some limitations. First, the percentage of non-projective sentences in languages ranges between $18.2$ and $26.4$ \citep{Gomez2016a} or between 6.8 and 36.4 \citep{Gomez2010a} (see also \citealp{Havelka2007a}). The limited coverage of projectivity raises the question if the projective baseline should be used for sentences that are not projective as it is customary in research on dependency distance minimization. In addition, projectivity {\em per se} implies a reduction in dependency distances, which raises the question if that rather strong constraint may mask the effect of the dependency distance minimization principle under investigation \citep{Gomez2022a}. Here we aim to make a step forward by considering planarity, a generalization of projectivity, so as to increase the coverage of real sentences and reduce the bias towards dependency minimization in the random baseline. The percentage of non-planar sentences in languages ranges between $14.3$ and $20.0$ \citep{Ferrer2018a} or between $5.3$ and $31$ \citep{Gomez2010a}. The latter range is consistent with earlier estimates \citep{Havelka2007a}.


This article is part of a research program on the statistical properties of $\VD{\Ftree}$ under constraints on the possible linear arrangements \citep{Ferrer2019a,Alemany2022a,Alemany2022b}. The remainder of the article is divided into two main parts: theory (Section \ref{sec:theory}) and applications (Section \ref{sec:applications}).

The theory part (Section \ref{sec:theory}) is structured as follows. In Section \ref{sec:theory:preliminaries}, we introduce notation used throughout that part. In Section \ref{sec:theory:counting_arrangements}, we first present a characterization of planar arrangements so as to identify their underlying structure, which we apply to count their number for a given free tree, and later on in Section  \ref{sec:theory:generating_arrangements}, to generate them u.a.r. by means of a novel $O(n)$-time algorithm. In Section \ref{sec:theory:E_pl_D}, we use said characterization to prove the main result of the article, namely that expectation of $\VD{\Ftree}$ in planar arrangements can be calculated from the expectation of projective arrangements, as the following theorem indicates.
\begin{theorem}
\label{thm:introduction:E_pl_D}
Given a free tree $\Ftree=(V,E)$,
\begin{align}
\ExpeDPlan
	\label{eq:introduction:E_pl_D:function_E_pr_D_fixed}
	&= \frac{1}{n}\sum_{u\in V} \ExpeDProjFixedNoVertex[\Rtree[u]] \\
	\label{eq:introduction:E_pl_D:function_E_pr_D}
	&= \frac{(n-1)(n-2)}{6n} + \frac{1}{n}\sum_{u\in V}\ExpeDProj[\Rtree[u]],
\end{align}
where $\ExpeDProjFixedNoVertex[\Rtree[u]]$ is the expected value of $\VD{\Rtree[u]}$ in uniformly random projective arrangements $\arr$ of $\Rtree[u]$ such that $\arr(u)=1$ and $\ExpeDProj[\Rtree[u]]$ (Equation \ref{eq:introduction:E_pr_D}) is the expected value of $\VD{\Rtree[u]}$ in uniformly random projective arrangements of $\Rtree[u]$, the free tree $\Ftree$ rooted at $u$.
\end{theorem}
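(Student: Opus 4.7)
The plan is to partition uniformly random planar arrangements of $\Ftree$ by their leftmost vertex and identify each class with a pinned-root projective arrangement. The key observation is that an arrangement $\arr$ of $\Ftree$ is planar if and only if, writing $u=\arr^{-1}(1)$, it is a projective arrangement of $\Rtree[u]$ with $\arr(u)=1$: the ``root not covered'' clause of projectivity is automatic when the root sits at the leftmost position, while projectivity implies planarity by definition. This yields, for each $u\in V$, a bijection between the planar arrangements placing $u$ first and $\ProjFixed{u}$.

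Next I would establish that $|\ProjFixed{u}|$ does not depend on $u$. Using the standard recursion (at each vertex $v$, its children's subtrees and $v$ itself admit $(\outdegree{v}[u]+1)!$ orderings), the total number of projective arrangements of $\Rtree[u]$ is $(d(u)+1)\prod_{v\in V}d(v)!$. Pinning $u$ to position~$1$ forces every child of $u$ to its right, removing the factor $d(u)+1$ and leaving $|\ProjFixed{u}|=\prod_{v\in V}d(v)!$, independent of $u$. Hence $|\Plan|=n\prod_v d(v)!$, and the leftmost vertex is uniformly distributed on $V$ under the uniform planar law. The first identity then follows from the law of total expectation:
\begin{equation*}
\ExpeDPlan
 \;=\; \sum_{u\in V}\lprob{\arr^{-1}(1)=u}\;\lcondexpe{\VD{\Ftree}}{\arr^{-1}(1)=u}
 \;=\; \frac{1}{n}\sum_{u\in V}\ExpeDProjFixed[u][\Rtree[u]],
\end{equation*}
since, conditional on $u$ being leftmost, the planar law on $\Ftree$ matches the uniform pinned-root projective law on $\Rtree[u]$ and $\VD{\Ftree}=\VD{\Rtree[u]}$.

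The hard part is the second identity: converting $\ExpeDProjFixed[u][\Rtree[u]]$ into the unconstrained $\ExpeDProj[\Rtree[u]]$ and verifying that the summed correction equals $(n-1)(n-2)/6$. The plan is to decompose a uniformly random projective arrangement of $\Rtree[u]$ by the bipartition of the children of $u$ into those placed left and those placed right of $u$: the pinned-root case corresponds to the ``empty-left'' summand, while the unconstrained expectation averages over all bipartitions, weighted by the sizes of the resulting left lists. Aggregating the extra left-side contributions, first over the child bipartition and then over the choice of root $u\in V$, should telescope to the closed-form additive term $(n-1)(n-2)/(6n)$. This collapse is where I anticipate the main technical effort, as it combines the non-uniform distribution of the root's position in a projective arrangement with the sum-of-length contributions of the entire left block.
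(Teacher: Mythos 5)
Your proof of the first identity (Equation \ref{eq:introduction:E_pl_D:function_E_pr_D_fixed}) is correct and is essentially the paper's argument: the same characterization of planar arrangements as pinned-root projective arrangements of $\Rtree[u]$ with $u=\invarr(1)$, the same count $|\ProjFixed{u}|=\prod_{v\in V}\degree{v}!$ making the leftmost vertex uniform on $V$, and the same application of the law of total expectation.

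For the second identity, however, there is a genuine gap: the entire quantitative content — that the aggregated difference $\frac{1}{n}\sum_{u\in V}\bigl(\ExpeDProjFixedNoVertex[\Rtree[u]]-\ExpeDProj[\Rtree[u]]\bigr)$ equals $\frac{(n-1)(n-2)}{6n}$ — is asserted to ``telescope'' but never computed. Your proposed route (conditioning on the left/right bipartition of the root's child subtrees and comparing the empty-left summand to the weighted average) is coherent in principle, but carrying it out requires the conditional expectation of the root-edge lengths given each bipartition, which is exactly the part you defer; nothing in the write-up pins down the constant. The paper takes a different and more local route: it decomposes each root edge $uv$ into anchor and coanchor, observes that pinning the root leaves the anchor's expectation unchanged while inflating the coanchor's by a factor $3/2$ (i.e.\ $\rexpefixed{\Vcoanchor{uv}}{u}=\tfrac{3}{2}\,\rcondexpe{\Vcoanchor{uv}}{u}$, proved in the Appendix), so that $\ExpeDProjFixedNoVertex[\Rtree[u]]=\ExpeDProj[\Rtree[u]]+\tfrac12\sum_{v\in\neighs{u}}\rcondexpe{\Vcoanchor{uv}}{u}$. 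The constant then falls out with no telescoping: using $\rcondexpe{\Vcoanchor{uv}}{u}=(n-\Nvert{u}{v}-1)/3$ and the identity $\Nvert{u}{v}+\Nvert{v}{u}=n$, each edge contributes $n-2$ to the sum over ordered pairs, giving $(n-1)(n-2)$ over the $n-1$ edges. You would need either to reproduce this coanchor computation or to actually execute your bipartition average before the second identity can be considered proved.
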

Table \ref{table:introduction:results_summary} summarizes the theoretical results obtained in previous articles and those presented in this article.

The applications part (Section \ref{sec:applications}) is structured as follows. In Section \ref{sec:applications:algorithm}, we apply Theorem \ref{thm:introduction:E_pl_D} to derive a $\bigO{n}$-time algorithm to calculate $\ExpeDPlan$. Since \citet{Alemany2022b} showed that $\ExpeDProj$ can be evaluated in time $\bigO{n}$, Equation \ref{eq:introduction:E_pl_D:function_E_pr_D} naturally leads to a $\bigO{n^2}$-time algorithm if it is evaluated `as is'. However, we devise a $\bigO{n}$-time algorithm to calculate $\ExpeDPlan$. In Section \ref{sec:applications:corpora}, we apply this and previous research on the projective case \citep{Alemany2022b} to a parallel syntactic dependency treebank. We find that the gap between the actual dependency distance and that of the random baseline, reduces as the strength of the formal constraint on dependency structures chosen for the random baseline increases, suggesting that formal constraints absorb part of the dependency distance minimization effect.

Finally, in Section \ref{sec:conclusions}, we review all the findings and make suggestions for future research.

From this point onwards, the article is organized to ease reading by readers of distinct profiles. Readers interested in the analysis of syntactic dependency treebanks can jump directly to Section \ref{sec:applications:corpora}. Readers interested in the algorithm for computing $\ExpeDPlan$ can jump directly to Section \ref{sec:applications:algorithm}, after reading Section \ref{sec:theory:preliminaries}. Readers whose primary interest is applying the algorithms have ready-to-use code: both methods to generate planar arrangements (Section \ref{sec:theory:generating_arrangements}) and the $\bigO{n}$-time calculation of $\ExpeDPlan$ (Section \ref{sec:applications:algorithm}) are freely available in the Linear Arrangement Library\footnote{Available online at \url{https://github.com/LAL-project/linear-arrangement-library/}.} \citep{Alemany2021a}. 

\begin{table}
	\centering
	\begin{tabular}{crc}
	\toprule
	
	\multirow{3}*{Unconstrained ($\Ftree$)}
						& $\NUnc$			&	$n!$ \\
						& $\expe{\Vd{uv}}$	&	\( \displaystyle \frac{n + 1}{3} \) \\
						& $\ExpeDUnc$		&	\( \displaystyle \frac{n^2 - 1}{3} \) \\
	
	\midrule
	
	\multirow{3}*{Planar ($\Ftree$)}
						& $\NPlan$			&	\( \displaystyle n\prod_{u\in V} \degree{u}! \) \\
						& $\lexpe{\Vd{uv}}$	&	\( \displaystyle 1 + \frac{1}{n}\sum_{s\in V\setminus\{u,v\}} \rexper{\Vd{uv}}{s} \) \\
						& $\ExpeDPlan$		&	\( \displaystyle \frac{(n-1)(n-2)}{6n} + \frac{1}{n}\sum_{u\in V}\ExpeDProj[\Rtree[u]] \) \\
	
	\midrule
	
	\multirow{3}*{Projective ($\Rtree$)}
						& $\NProj$			&	\( \displaystyle \prod_{u\in V} (\outdegree{u} + 1)! \) \\
						& $\rexpe{\Vd{uv}}$	&	\( \displaystyle \frac{1}{6}(2\Nvert{\Root}{u} + \Nvert{\Root}{v} + 1) \) \\
						& $\ExpeDProj$		&	\( \displaystyle \frac{1}{6}\left(-1 + \sum_{v\in V}\Nvert{\Root}{v}(2\outdegree{v} + 1) \right) \) \\
												
	\bottomrule
	\end{tabular}
	
	\caption{Summary of the main mathematical results for increasing constraints on linear orders. Results for the unconstrained and projective cases are borrowed from previous research (\citealp{Ferrer2004a} and \citealp{Alemany2022b}, respectively). Results for the planar case are a contribution of the present article. $\NProj$, $\NPlan$ and $\NUnc$ denote the number of distinct projective, planar and unconstrained linear arrangements, respectively, of a rooted tree $\Rtree$ or of a free tree $\Ftree$. $\rexpe{\Vd{uv}}$, $\lexpe{\Vd{uv}}$ and $\expe{\Vd{uv}}$ denote the expected length of an edge in random linear arrangement for the projective, planar and unconstrained cases, respectively. $\rexper{\Vd{uv}}{s}$ is the expected value of $\Vd{uv}$ conditioned to having vertex $s$ as root of the tree. In $\rexpe{\Vd{uv}}$ the root is vertex $\Root$. }
	\label{table:introduction:results_summary}
\end{table}
\section{Theory}
\label{sec:theory}

\subsection{Definitions and notation}
\label{sec:theory:preliminaries}

We use $u,v,w,z$ to denote vertices, $\Root$ to always denote a root vertex, and $i,j,k,p,q$ to denote integers. The edges of a free tree are undirected, and denoted as $\{u,v\}=uv$; those of rooted trees are directed, denoted as $(u,v)$, and oriented away from $\Root$ towards the leaves. 

Let $\neighs{u}$ denote the set of neighbors of $u\in V$ in the free tree $\Ftree$, and let $\outneighs{u}$ denote the out neighbors (also, children) of $u\in V$ in $\Rtree$. Notice that, $\outneighs{u} \subseteq \neighs{u}$ with equality if, and only if $u=\Root$. Let $\outdegree{u}=|\outneighs{u}|$ denote the out-degree of vertex $u$ of a rooted tree $\Rtree$, and let $\degree{u}=|\neighs{u}|$ denote the degree of $u$ in a free tree $\Ftree$. Notice that $\outdegree{u}=\degree{u}-1$ when $u\neq\Root$ and $\outdegree{\Root}=\degree{\Root}$. Furthermore, we denote the subtree rooted at $v$ with respect to root $u$ as $\SubRtree{v}[u]$ (obviously $\SubRtree{\Root}=\Rtree$), and its size as $\Nvert{u}{v}=|V(\SubRtree{v}[u])|$ (Figure \ref{fig:theory:preliminaries:tree_types}). We call this {\em directional} size \citep{Hochberg2003a,Alemany2022a}. Note that $\Nvert{v}{u} + \Nvert{u}{v} = n$ for any $uv\in E$.

\begin{figure}
	\centering
\scalebox{1}{
\begin{tikzpicture}[ipe stylesheet]
  \pic
     at (60, 788) {ipe disk};
  \pic
     at (60, 760) {ipe disk};
  \pic
     at (76, 776) {ipe disk};
  \pic
     at (100, 772) {ipe disk};
  \pic
     at (88, 756) {ipe disk};
  \pic
     at (116, 756) {ipe disk};
  \pic
     at (124, 788) {ipe disk};
  \pic
     at (104, 788) {ipe disk};
  \pic
     at (124, 772) {ipe disk};
  \draw
    (60, 788)
     -- (76, 776)
     -- (100, 772)
     -- (124, 772);
  \draw
    (104, 788)
     -- (100, 772)
     -- (124, 788);
  \draw
    (116, 756)
     -- (100, 772);
  \draw
    (88, 756)
     -- (76, 776)
     -- (60, 760);
  \node[ipe node]
     at (56, 796) {$\Ftree$};
  \node[ipe node]
     at (76, 780) {$u$};
  \node[ipe node]
     at (92, 776) {$v$};
  \pic
     at (180, 788) {ipe disk};
  \pic
     at (160, 772) {ipe disk};
  \pic
     at (176, 772) {ipe disk};
  \pic
     at (192, 772) {ipe disk};
  \pic
     at (208, 772) {ipe disk};
  \node[ipe node]
     at (184, 792) {$u$};
  \node[ipe node]
     at (204, 776) {$v$};
  \draw[-{>[ipe arrow tiny]}]
    (180, 788)
     -- (161.665, 773.457);
  \draw[-{>[ipe arrow tiny]}]
    (180, 788)
     -- (176.568, 774.056);
  \draw[-{>[ipe arrow tiny]}]
    (180, 788)
     -- (190.886, 773.937);
  \draw[-{>[ipe arrow tiny]}]
    (180, 788)
     -- (206.186, 773.457);
  \pic
     at (192, 756) {ipe disk};
  \pic
     at (204, 756) {ipe disk};
  \pic
     at (216, 756) {ipe disk};
  \pic
     at (228, 756) {ipe disk};
  \draw[-{>[ipe arrow tiny]}]
    (208, 772)
     -- (193.289, 757.634);
  \draw[-{>[ipe arrow tiny]}]
    (208, 772)
     -- (204.436, 758.18);
  \draw[-{>[ipe arrow tiny]}]
    (208, 772)
     -- (215.349, 758.18);
  \draw[-{>[ipe arrow tiny]}]
    (208, 772)
     -- (226.574, 757.868);
  \node[ipe node]
     at (156, 796) {$\Rtree[u]$};
  \node[ipe node]
     at (40, 796) {a)};
  \node[ipe node]
     at (140, 796) {b)};
  \draw
    (180, 788) circle[radius=4];
\end{tikzpicture}
}
	\caption{a) A free tree $\Ftree$, where $\degree{u}=4$, and $\degree{v}=5$; in this tree, $\Nvert{u}{v}=5$ and $\Nvert{v}{u}=4$. b) The free tree $\Ftree$ rooted at $u$, denoted as $\Rtree[u]$, where $\outdegree{u}[u] = \outdegree{v}[\Rtree[u]] = \degree{u}=4$, and where $4 = \outdegree{v}[u] = \outdegree{v}[\Rtree[u]] < \degree{v}=5$. Figure borrowed from \citep{Hochberg2003a,Alemany2022a}.}
	\label{fig:theory:preliminaries:tree_types}
\end{figure}

As in previous research, we also decompose an edge $(\Root,u)$ in a projective arrangement $\arr$ into two parts: its {\em anchor} and its {\em coanchor}, as in Figure \ref{fig:theory:preliminaries:anchor_coanchor} \citep{Shiloach1979a,Chung1984a,Alemany2022b}. Informally, $\anchor{\Root u}$ is the number of vertices in $\arr$ covered by $(\Root,u)$ in the segment of $\SubRtree{u}$ including vertex $u$ (Figure \ref{fig:theory:preliminaries:anchor_coanchor}); similarly, $\coanchor{\Root u}$, is the number of vertices of $\arr$ covered by $(\Root,u)$ in segments that fall between $\Root$ and $u$ (Figure \ref{fig:theory:preliminaries:anchor_coanchor}). The length of an edge connecting $\Root$ with $u$ can be expressed with the formula
\begin{equation*}
\lenedge{\Root u} = |\arr(\Root) - \arr(u)| = \anchor{\Root u} + \coanchor{\Root u},
\end{equation*}
where $\anchor{\Root u}$ is the length of the anchor and $\coanchor{\Root u}$ is the length of the coanchor. The length of the anchor and coanchor can be formally defined as
\begin{align*}
\anchor{\Root u}   &= |\arr(u) - \arr(z)| + 1 \\
\coanchor{\Root u} &= |\arr(z) - \arr(\Root)| - 1,
\end{align*}
where $z\in V(\SubRtree{u})$ is the vertex of $\SubRtree{u}$ closest to $\Root$ in $\arr$ (Figure \ref{fig:theory:preliminaries:anchor_coanchor}). The same notation with $\arr$ omitted, $\Vanchor{\Root u}$ and $\Vcoanchor{\Root u}$ denote random variables. Furthermore, it will be useful to define the operator $\fixed$, which we use to condition expected values and constrain sets of arrangements of a rooted tree, in both cases to arrangements $\arr$ where (only) the root is fixed at the leftmost position of $\arr$. For instance, if $S$ is a set of arrangements $\arr$ of a rooted tree $\Rtree$ then $S^{\fixed}=\{ \arr \in S \;|\; \arr(\Root)=1 \}$. Moreover, if $X$ is defined on uniformly random arrangements from $S$ then $\expefixed{X}$ is the expected value of $X$ in uniformly random arrangements from $S^{\fixed}$.

\begin{figure}
	\centering
\scalebox{1}{
\thispagestyle{empty}
\noindent
\begin{tikzpicture}[ipe stylesheet]
  \pic
     at (288, 604) {ipe disk};
  \node[ipe node]
     at (286.384, 591.659) {$\Root$};
  \node[ipe node]
     at (154, 648) {$\anchor{\Root u}$};
  \draw
    (180, 636)
     -- (180, 640)
     -- (292, 640)
     -- (292, 636);
  \draw
    (160, 636)
     -- (160, 640)
     -- (180, 640)
     -- (180, 636);
  \draw
    (140, 608) rectangle (180, 600);
  \pic[ipe mark tiny]
     at (232, 604) {ipe disk};
  \pic[ipe mark tiny]
     at (224, 604) {ipe disk};
  \pic[ipe mark tiny]
     at (216, 604) {ipe disk};
  \node[ipe node]
     at (141.88, 588.221) {$\SubRtree{u}$};
  \draw
    (287.9998, 603.9999)
     arc[start angle=33.8902, end angle=142.4731, radius=77.6255];
  \node[ipe node]
     at (210, 648) {$\coanchor{\Root u}$};
  \draw
    (292, 608) rectangle (284, 600);
  \draw[ipe dash dashed]
    (180, 636)
     -- (180, 604);
  \draw[ipe dash dashed]
    (160, 636)
     -- (160, 604);
  \draw[ipe dash dashed]
    (292, 634)
     -- (292, 602);
  \draw
    (184, 608) rectangle (212, 600);
  \draw
    (236, 608) rectangle (280, 600);
  \node[ipe node]
     at (195.88, 588.221) {$\SubRtree{v}$};
  \node[ipe node]
     at (247.88, 588.221) {$\SubRtree{w}$};
  \draw
    (288, 604)
     arc[start angle=35.0345, end angle=139.7604, radius=55.6203];
  \draw
    (287.9998, 603.9999)
     arc[start angle=33.6209, end angle=132.1286, radius=21.2833];
  \pic[ipe mark small]
     at (178, 604) {ipe disk};
  \pic[ipe mark small]
     at (186, 604) {ipe disk};
  \pic[ipe mark small]
     at (162, 604) {ipe disk};
  \node[ipe node]
     at (158, 594) {$u$};
  \node[ipe node]
     at (174, 594) {$z$};
\end{tikzpicture}
}
	\caption{Illustration of an edge's anchor $\anchor{\Root u}$ and coanchor $\coanchor{\Root u}$. In this figure, $u,v,w\in\neighs{\Root}$. Figure adapted from \citep{Alemany2022b}.}
	\label{fig:theory:preliminaries:anchor_coanchor}
\end{figure}

Finally, in this article we consider that two arrangements $\arr$ and $\arr'$ of the same tree $\Ftree$ are {\em different} if there is (at least) one vertex $u$ for which $\arr(u)\neq\arr'(u)$.
\subsection{Counting planar arrangements}
\label{sec:theory:counting_arrangements}

It is well known that the number of unconstrained arrangements of an $n$-vertex tree is $n!$. This is true given that arrangements are simply permutations, and unconstrained arrangements are not subject to any particular constraint, thus all vertex orderings are possible. Building on the fact that projective arrangements span over contiguous intervals \citep{Kuhlmann2006a}, \citet{Alemany2022b} studied the expected value of the random variable $\VD{\Rtree}$ in such arrangements by defining, as usual, a set of {\em segments} $\Phi_u$ associated to each vertex $u$, consisting of the segments associated to the subtrees $\SubRtree{u_1},\dots,\SubRtree{u_p}$ and $u$. A {\em segment} of a rooted tree $\SubRtree{u}$ is a segment within the linear ordering containing all vertices of $\SubRtree{u}$, an interval of length $\Nvert{\Root}{u}$ whose starting and ending positions are unknown until the whole tree is fully linearized; thus, a segment is a movable set of vertices within the linear ordering \citep{Alemany2022b}. For a vertex $u$, the set $\Phi_u$ is constructed from vertex $u$'s segment and the segments of its children $\outneighs{u}=\{u_1,\dots,u_k\}$ (Figure \ref{fig:theory:counting_arrangements:segments_of_trees}). Decomposing every vertex and its segments from the root to the leaves linearizes $\Rtree$ into a projective arrangement (Figure \ref{fig:theory:counting_arrangements:segments_of_trees}). This characterization led to a straightforward derivation of the total amount of projective arrangements of a rooted tree $\Rtree$ (Table \ref{table:introduction:results_summary})
\begin{equation}
\label{eq:theory:counting_arrangements:amount_projective}
\NProj = \prod_{u\in V} (\outdegree{u} + 1)!.
\end{equation}

\begin{figure}
	\centering
\scalebox{1}{
\begin{tikzpicture}[ipe stylesheet]
  \pic
     at (152, 776) {ipe disk};
  \pic
     at (88, 740) {ipe disk};
  \pic
     at (160, 740) {ipe disk};
  \pic
     at (224, 740) {ipe disk};
  \pic
     at (56, 708) {ipe disk};
  \pic
     at (76, 708) {ipe disk};
  \pic
     at (116, 708) {ipe disk};
  \draw[-{ipe pointed[ipe arrow small]}]
    (152, 776)
     -- (89.9278, 740.996);
  \draw[-{ipe pointed[ipe arrow small]}]
    (152, 776)
     -- (159.427, 742.351);
  \draw[-{ipe pointed[ipe arrow small]}]
    (152, 776)
     -- (221.414, 740.874);
  \draw[-{ipe pointed[ipe arrow small]}]
    (88, 740)
     -- (57.1252, 710.607);
  \draw[-{ipe pointed[ipe arrow small]}]
    (88, 740)
     -- (76.597, 710.854);
  \draw[-{ipe pointed[ipe arrow small]}]
    (88, 740)
     -- (114.555, 710.73);
  \pic[ipe mark tiny]
     at (88, 688) {ipe disk};
  \pic[ipe mark tiny]
     at (96, 688) {ipe disk};
  \pic[ipe mark tiny]
     at (104, 688) {ipe disk};
  \pic[ipe mark tiny]
     at (180, 708) {ipe disk};
  \pic[ipe mark tiny]
     at (192, 708) {ipe disk};
  \pic[ipe mark tiny]
     at (204, 708) {ipe disk};
  \draw
    (160, 740)
     -- (136, 664)
     -- (184, 664)
     -- (160, 740);
  \draw
    (224, 740)
     -- (200, 664)
     -- (248, 664)
     -- (224, 740);
  \draw
    (56, 708)
     -- (48, 680)
     -- (64, 680)
     -- (56, 708);
  \draw
    (76, 708)
     -- (68, 680)
     -- (84, 680)
     -- (76, 708);
  \draw
    (116, 708)
     -- (108, 680)
     -- (124, 680)
     -- (116, 708);
  \node[ipe node]
     at (72, 740) {$\Root_1$};
  \node[ipe node]
     at (144, 780) {$\Root$};
  \node[ipe node]
     at (144, 740) {$\Root_2$};
  \node[ipe node]
     at (204, 740) {$\Root_p$};
  \node[ipe node]
     at (60, 704) {$u_1$};
  \node[ipe node]
     at (80, 704) {$u_2$};
  \node[ipe node]
     at (104, 704) {$u_q$};
  \node[ipe node]
     at (48, 776) {a)};
  \node[ipe node]
     at (48, 624) {b)};
  \draw
    (80, 600) rectangle (112, 592);
  \draw
    (120, 600) rectangle (152, 592);
  \pic[ipe mark tiny]
     at (176, 596) {ipe disk};
  \pic[ipe mark tiny]
     at (184, 596) {ipe disk};
  \pic[ipe mark tiny]
     at (192, 596) {ipe disk};
  \pic
     at (164, 596) {ipe disk};
  \draw
    (200, 600) rectangle (232, 592);
  \node[ipe node]
     at (80, 652) {$\SubRtree{\Root_1}$};
  \node[ipe node]
     at (152, 652) {$\SubRtree{\Root_2}$};
  \node[ipe node]
     at (220, 652) {$\SubRtree{\Root_p}$};
  \node[ipe node]
     at (48, 672) {$\SubRtree{u_1}$};
  \node[ipe node]
     at (68, 672) {$\SubRtree{u_2}$};
  \node[ipe node]
     at (108, 672) {$\SubRtree{u_q}$};
  \node[ipe node]
     at (88, 580) {$\SubRtree{\Root_1}$};
  \node[ipe node]
     at (132, 580) {$\SubRtree{\Root_2}$};
  \node[ipe node]
     at (208, 580) {$\SubRtree{\Root_p}$};
  \node[ipe node]
     at (164, 580) {$\Root$};
  \draw[ipe pen heavier, ipe dash dash dotted, -{ipe pointed[ipe arrow small]}]
    (163.9996, 596)
     arc[start angle=0, end angle=173.267, radius=34.1176];
  \draw[-{ipe pointed[ipe arrow small]}]
    (164, 596)
     arc[start angle=180, end angle=351.2025, x radius=26.154, y radius=-26.154];
  \node[ipe node]
     at (48, 544) {c)};
  \node[ipe node]
     at (88, 500) {$\SubRtree{u_1}$};
  \node[ipe node]
     at (144, 500) {$\SubRtree{u_2}$};
  \node[ipe node]
     at (208, 500) {$\SubRtree{u_q}$};
  \draw
    (80, 520) rectangle (112, 512);
  \draw
    (136, 520) rectangle (168, 512);
  \pic[ipe mark tiny]
     at (176, 516) {ipe disk};
  \pic[ipe mark tiny]
     at (184, 516) {ipe disk};
  \pic[ipe mark tiny]
     at (192, 516) {ipe disk};
  \pic
     at (124, 516) {ipe disk};
  \draw
    (200, 520) rectangle (232, 512);
  \node[ipe node]
     at (120, 500) {$\Root_1$};
  \draw[-{ipe pointed[ipe arrow small]}]
    (124.0002, 516)
     arc[start angle=-9.5256, end angle=173.2653, radius=14.1463];
  \draw[-{ipe pointed[ipe arrow small]}]
    (124, 516)
     .. controls (128, 532) and (146, 532) .. (152, 520);
  \node[ipe node]
     at (60.123, 594.151) {$\Phi_{\Root}$ :};
  \node[ipe node]
     at (56.123, 514.275) {$\Phi_{\Root_1}$ :};
  \draw[ipe dash dotted]
    (80, 592)
     -- (80, 520);
  \draw[ipe dash dotted]
    (112, 592)
     -- (232, 520);
  \draw[-{ipe pointed[ipe arrow small]}]
    (124, 516)
     .. controls (128, 552) and (184, 564) .. (216, 520);
  \draw
    (160, 600) rectangle (168, 592);
  \draw
    (120, 520) rectangle (128, 512);
  \draw
    (152, 776) circle[radius=4];
  \draw
    (88, 764)
     .. controls (58.6667, 737.3333) and (42.6667, 704) .. (40, 664);
  \draw
    (88, 764)
     .. controls (117.3333, 737.3333) and (132, 704) .. (132, 664);
  \draw
    (40, 664)
     -- (132, 664);
  \draw[ipe pen heavier, ipe dash dash dotted, -{ipe pointed[ipe arrow small]}]
    (154.939, 566.419)
     arc[start angle=125.9654, end angle=163.6128, radius=81.8104];
  \draw[-{ipe pointed[ipe arrow small]}]
    (164, 596)
     arc[start angle=0.9602, end angle=162.7796, radius=14.322];
\end{tikzpicture}
}
	\caption{a) A rooted tree $\Rtree$ where $\neighs{\Root}=\{\Root_1, \dots, \Root_p\}$ are the $p$ children of $\Root$. The subtree $\SubRtree{\Root_1}$ has been circled for clarity. b) An example of a permutation of the segments in $\Phi_\Root$ associated to the root. c) An example of a permutation of the segments in $\Phi_{\Root_1}$ associated to $\Root_1$, the segment at the leftmost position in the example in (b). The dash-dotted edge in (b) and in (c) represent the same edge of the tree. In (b) and (c), respectively, $\Root$ and $\Root_1$ are segments of length 1.}
	\label{fig:theory:counting_arrangements:segments_of_trees}
\end{figure}
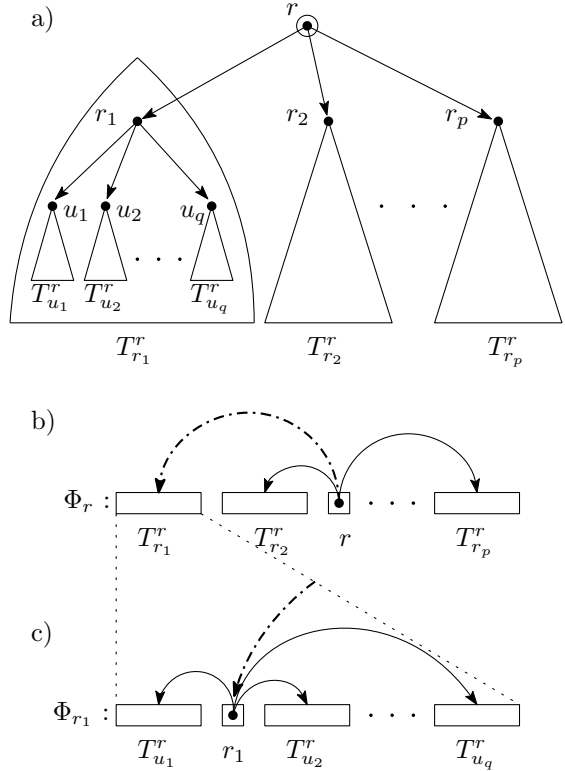

Using the structure of segments summarized above, we present a characterization of planar arrangements of free trees which helps to devise a method to generate planar arrangements u.a.r. (Section \ref{sec:theory:generating_arrangements:planar}) and to prove Theorem \ref{thm:introduction:E_pl_D} (Section \ref{sec:theory:E_pl_D}). To this aim, we define $\ProjFixed{\Root}$ as the set of projective arrangements of a rooted tree $\Rtree$ such that $\arr(\Root)=1$, and denote its size as $\NProjFixed{\Root}=|\ProjFixed{\Root}|$. Notice that when a vertex $u$ is fixed to the leftmost position, the planar arrangements in $\ProjFixed{u}$ are obtained by arranging the subtrees $\SubRtree{v}[u]$, $v\in\neighs{u}$, projectively to the right of $u$ in the linear arrangement. It is important to bear in mind that the operator $\fixed$ only fixes the root vertex $\Root$ to the leftmost position of the arrangement: the other vertices can be placed freely as long as the result is projective.

\begin{proposition}
\label{prop:theory:counting_arrangements:amount_planar}
The number of planar arrangements of an $n$-vertex free tree $\Ftree=(V,E)$, with $V=\{u_1,\cdots,u_n\}$ is
\begin{equation}
\label{eq:theory:counting_arrangements:amount_planar}
\NPlan
	= n\NProjFixed{u_1}
	= \cdots
	= n\NProjFixed{u_n}
	= n\prod_{u\in V} \degree{u}!.
\end{equation}
\end{proposition}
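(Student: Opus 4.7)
My plan is to establish the main equality $\NPlan = n\NProjFixed{u_1}$ via a direct bijective argument, and then evaluate $\NProjFixed{u_1}$ using the segment decomposition of projective arrangements recalled just above the statement.

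First I would partition the set of planar arrangements of $\Ftree$ according to which vertex sits at position $1$: for each $v\in V$ let $A_v = \{\arr \in \Plan : \arr(v)=1\}$, so the $A_v$ are disjoint and $\NPlan = \sum_{v\in V}|A_v|$. I would then show $|A_v| = \NProjFixed{v}$ by a definition-level equivalence. If $\arr\in A_v$, vertex $v$ sits at the leftmost position, so no edge can have endpoints strictly on both sides of $v$; combined with the absence of crossings this says exactly that $\arr$ is a projective arrangement of $\Rtree[v]$ with $v$ fixed at position $1$. Conversely, any element of $\ProjFixed{v}$ is by definition crossing-free, i.e.\ planar, and has $v$ at position $1$, so it belongs to $A_v$.

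Next I would compute $\NProjFixed{v}$ using the segment characterization. A projective arrangement of $\Rtree[v]$ is obtained by choosing, independently at each vertex $u$, a linear ordering of the segments in $\Phi_u$. For a non-root vertex $u\neq v$, the set $\Phi_u$ consists of $u$ together with its $\outdegree{u}[v]$ children, yielding $(\outdegree{u}[v]+1)! = \degree{u}!$ orderings. At the root $v$, fixing $\arr(v)=1$ forces the singleton segment $\{v\}$ to the leftmost slot of $\Phi_v$ and removes the freedom of interleaving $v$ among its children, leaving $\outdegree{v}[v]! = \degree{v}!$ orderings of the children's segments. Taking the product gives
\begin{equation*}
\NProjFixed{v} \;=\; \degree{v}! \prod_{u\neq v} \degree{u}! \;=\; \prod_{u\in V} \degree{u}!,
\end{equation*}
independently of $v$.

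Combining both ingredients yields $\NPlan = \sum_{v\in V} \NProjFixed{v} = n\prod_{u\in V}\degree{u}!$, and since each $\NProjFixed{u_i}$ equals this common value, the full chain of equalities in Equation~\ref{eq:theory:counting_arrangements:amount_planar} follows. The only delicate point I anticipate is the bidirectional equivalence "planar with $v$ at position $1$" $\Leftrightarrow$ "projective rooted at $v$ with $v$ at position $1$"; the rest is bookkeeping on segment permutations, mirroring the projective count $\NProj = \prod_u(\outdegree{u}+1)!$ but with a single deflation at the root to account for the fixed leftmost slot.
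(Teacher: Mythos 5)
Your proposal is correct and follows essentially the same route as the paper: partition the planar arrangements by the vertex occupying position $1$, identify each block with $\ProjFixed{v}$ (the paper likewise observes that fixing $v$ leftmost reduces planarity to projectivity rooted at $v$), and count $\NProjFixed{v}=\prod_{u\in V}\degree{u}!$ via the segment decomposition. Your explicit remark that the leftmost vertex cannot be covered by any edge is a slightly more detailed justification of the step the paper dismisses as ``easy to see,'' but the argument is the same.
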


%
\begin{proof}
Given a free tree $\Ftree$, and any two distinct vertices $u,v$, it holds that $\ProjFixed{u}\cap \ProjFixed{v}=\emptyset$ because the vertices in the first positions are different. This lets us partition $\Plan$ into the non-empty pairwise-disjoint sets $\ProjFixed{u}$ and see that
\begin{equation*}
\NPlan = \sum_{u\in V} \NProjFixed{u}.
\end{equation*}
It is easy to see that
\begin{equation*}
\NProjFixed{u}
	= \degree{u}! \prod_{v\in\neighs{u}} \NProj[\SubRtree{v}[u]]
	= \prod_{v\in V} \degree{v}!.
\end{equation*}
We used Equation \ref{eq:theory:counting_arrangements:amount_projective} in the second equality. Notice that
\begin{equation*}
\NProjFixed{u_1} = \dots = \NProjFixed{u_n},
\end{equation*}
since the value $\NProjFixed{u}$ does not depend on the root vertex $u$. Therefore, Equation \ref{eq:theory:counting_arrangements:amount_planar} follows immediately.
\end{proof}
%

Obviously, there are more planar arrangements of a free tree $\Ftree$ than projective arrangements of any `rooting' $\Rtree$ of $\Ftree$, formally $\NPlan\ge\NProj$. We can see this by noticing that, when given a `rooting' of $\Ftree$ at $\Root\in V$,
\begin{equation*}
\frac{\NPlan}{\NProj}
	=
	\frac{
		n \degree{\Root}!\prod_{u\in V\setminus\{\Root\}}\degree{u}! 
	}{
		  (\degree{\Root} + 1)!\prod_{u\in V\setminus\{\Root\}}\degree{u}! 
	}
	=
	\frac{n}{\degree{\Root} + 1} \ge 1,
\end{equation*}
with equality when $\Ftree$ is a star tree\footnote{An $n$-vertex star tree consists of a vertex connected to $n-1$ leaves; it is also a complete bipartite graph $K_{1,n-1}$.} and $\Root$ is its vertex of highest degree.
\subsection{Generating arrangements uniformly at random}
\label{sec:theory:generating_arrangements}

Arrangements can be generated freely, that is, by imposing no constraint on the possible orderings, where all the $n!$ possible orderings are equally likely, or by imposing some constraint on the possible orderings. Generating unconstrained arrangements is straightforward: it is well known that a permutation of $n$ elements can be generated u.a.r. in time $\bigO{n}$ \citep{Cormen2001a}. It can be done as follows. Assume we are given a set of $n$ vertices, say $V=\{u_1,\dots,u_n\}$, and let $i=1$. Repeat the following steps $n$ times,
\begin{enumerate}
\item Select u.a.r. a vertex from $V$; the vertex is chosen with probability $1/(n-i+1)$. Let $u_i$ be said vertex,
\item Place $u_i$ in the arrangement at position $i$, that is, let $\arr(u_i)=i$,
\item Remove $u_i$ from $V$,
\item Increment $i$ by $1$.
\end{enumerate}
The product of all probabilities of vertex choice gives that the probability of producing a certain linear arrangement is
\begin{equation*}
\prod_{i=1}^n \frac{1}{n - i + 1} = \frac{1}{n!}
\end{equation*}
thus the arrangement is constructed uniformly at random. Since the removal of a vertex from the set and uniformly random choice of vertex can both be implemented in constant time (using arrays), the running time is $\bigO{n}$.

When constraints are involved, projectivity is often the preferred choice \citep{Gildea2007a,Liu2008a,Futrell2015a}. First, we present a $\bigO{n}$-time procedure to generate projective arrangements u.a.r. (Section \ref{sec:theory:generating_arrangements:projective}) and review methods used in past research (Section \ref{sec:theory:generating_arrangements:past_research}). Then we present a novel $\bigO{n}$-time procedure to generate planar arrangements u.a.r. (Section \ref{sec:theory:generating_arrangements:planar}) which in turn involves the generation of random projective arrangements of a subtree.

\subsubsection{Generating projective arrangements}
\label{sec:theory:generating_arrangements:projective}

The method we will present in detail here was outlined first by \citet{Futrell2015a}. Here we borrow from recent theoretical research summarized above \citep{Alemany2022b} to derive a detailed algorithm to generate projective arrangements and prove its correctness. 

{\small
\begin{algorithm}
	\caption{Generating projective arrangements u.a.r.}
	\label{algo:theory:generating_arrangements:projective:uar}
	\DontPrintSemicolon
	
	\SetKwProg{Fn}{Function}{ is}{end}
	\Fn{\textsc{Random\_Projective\_Arrangement}$(\Rtree)$} {
		\KwIn{$\Rtree$ a rooted tree.}
		\KwOut{A projective arrangement $\arr$ of $\Rtree$ chosen u.a.r.}
		$\arr \gets$ empty $n$-vertex arrangement\;
		\tcp{Algorithm \ref{algo:theory:generating_arrangements:projective:uar:subtree}}
		$\textsc{Random\_Projective\_Arrangement\_Subtree}(\Rtree, \Root, 1, \arr)$ \;
		\Return $\arr$
	}
\end{algorithm}
}

{\small
\begin{algorithm}
	\caption{Generating projective arrangements u.a.r. of a subtree.}
	\label{algo:theory:generating_arrangements:projective:uar:subtree}
	\DontPrintSemicolon
	
	\SetKwProg{Fn}{Function}{ is}{end}
	\Fn{\textsc{Random\_Projective\_Arrangement\_Subtree}$(\Rtree, u, p, \arr)$} {
		\KwIn{$\Rtree$ a rooted tree, $u$ any vertex of $\Rtree$, $p$ the starting position to arrange the vertices of $\SubRtree{u}$, $\arr$ partially-constructed without $\SubRtree{u}$.}
		\KwOut{$\arr$ partially-constructed with $\SubRtree{u}$.}
		
		$\Phi_u \gets$ a random permutation of $\outneighs{u}\cup\{u\}$ \;
		
		\For {$v\in\Phi_u$} {
			\If { $v = u$ } {
				$\arr(v)\gets p$\;
				$p\gets p + 1$
			}
			\Else {
				$\textsc{Random\_Projective\_Arrangement\_Subtree}(\Rtree, v, p, \arr)$\;
				$p\gets p + \Nvert{\Root}{v}$
			}
		}
	}
\end{algorithm}
}

In order to generate projective arrangements u.a.r., simply make random permutations of a vertex $u$ and its children $\outneighs{u}$, that is, choose one of the possible $(\outdegree{u} + 1)!$ permutations u.a.r. Algorithm \ref{algo:theory:generating_arrangements:projective:uar} formalizes this brief description.
The proof that Algorithm \ref{algo:theory:generating_arrangements:projective:uar} produces projective arrangements of a rooted tree $\Rtree$ u.a.r. is simple. The first call takes the root and its dependents and produces a uniformly random permutation with probability $1/(\degree{\Root} + 1)!$. Subsequent recursive calls (in Algorithm \ref{algo:theory:generating_arrangements:projective:uar:subtree}) produce the corresponding permutations each with its respective uniform probability, hence the probability of producing a particular permutation is the product of individual probabilities. Using Equation \ref{eq:theory:counting_arrangements:amount_projective}, we easily obtain that the probability of producing a certain projective arrangement is
\begin{equation*}
\prod_{u\in V} \frac{1}{(\outdegree{u} + 1)!} = \frac{1}{\NProj}.
\end{equation*}

\subsubsection{Generation of projective arrangements in past research}
\label{sec:theory:generating_arrangements:past_research}

Algorithm \ref{algo:theory:generating_arrangements:projective:uar} is equivalent to the ``fully random'' method used by \citet{Futrell2015a} as witnessed by the implementation of their code available on Github\footnote{\url{https://github.com/Futrell/cliqs/tree/44bfcf2c42c848243c264722b5eccdffec0ede6a}}, in particular in file {\tt cliqs/mindep.py}\footnote{\url{https://github.com/Futrell/cliqs/blob/44bfcf2c42c848243c264722b5eccdffec0ede6a/cliqs/mindep.py}} (function {\tt \_randlin\_projective}). Notice that \citet{Futrell2015a} outline (though vaguely) that a projective arrangement is generated randomly by ``Starting at the root node of a dependency tree, collecting the head word and its dependents and order them randomly''.

\citet{Futrell2015a} present their method to generate random projective arrangements as though it were the same as that by \citet{Gildea2007a,Gildea2010a}, who introduced a method to generate random linearizations of a tree which consists of ``choosing a random branching direction for each dependent of each head,\footnote{That is, as explained by \citet{Temperley2018a}, ``choose a random assignment of each dependent to either the left or the right of its head.''} and -- in the case of multiple dependents on the same side -- randomly ordering them in relation to the head'' \citep{Gildea2010a}. However, \citet{Futrell2015a} do not actually implement Gildea \& Temperley's method as witnessed by their code. Critically, Gildea \& Temperley's method does not produce uniformly random linearizations as we show with a counterexample.

Consider a star tree rooted at its hub. Let $X$ be a random variable for the position of the root in a random projective linear arrangement ($1 \leq X \leq n$). We have $\prob{X = x} = 1/n$ for all $x \in [1,n]$, therefore $X$ follows a uniform distribution and hence $\expe{X} = (n+1)/2$ and $\var{X} = (n^2 - 1)/12$ \citep{Mitzenmacher2017a}. Let $X'$ be a random variable for the position of the root according to Gildea \& Temperley's method. It is easy to see that $X'-1$ follows a binomial distribution with parameters $n-1$ and $1/2$. Namely, $\prob{X'- 1 = x} = {n - 1 \choose x}/2^{n-1}$. We have that $\expe{X'} = 1 + \expe{X'-1} = (n + 1)/2 = \expe{X}$, but $\var{X'} = \var{X' - 1} = (n - 1)/4$. Therefore, the variance in a truly uniformly random projective linear arrangement is $\Theta(n^2)$ while Gildea \& Temperley's method results in $\Theta(n)$, a much smaller dispersion. As $n\rightarrow \infty$, $X'-1$ converges to a Gaussian distribution. 

Gildea \& Temperley's method was introduced as a random baseline for the distance between syntactically-related words in languages and has been used with that purpose \citep{Gildea2007a,Gildea2010a,Temperley2018a}. Interestingly, the minimum baseline, namely, the minimum sum of dependency distances, results from placing the root at the center \citep{Shiloach1979a,Chung1984a}. The example above shows that Gildea \& Temperley's baseline tends to put the root at the center of the linear arrangement with higher probability than the truly uniform baseline. That behavior casts doubts on the power of that random baseline to investigate dependency distance minimization in languages since it tends to place the root at the center of the sentence, as expected from an optimal placement under projectivity \citep{Gildea2007a,Alemany2021a} and does it with much lower dispersion around the center than in truly uniformly random linearizations.

\subsubsection{Generating planar arrangements}
\label{sec:theory:generating_arrangements:planar}

Proposition \ref{prop:theory:counting_arrangements:amount_planar} leads to a method to generate planar arrangements u.a.r. for any free tree $\Ftree$. The method we propose is detailed in Algorithm \ref{algo:theory:generating_arrangements:planar:uar}.

{\small
\begin{algorithm}
	\caption{Generating planar arrangements u.a.r.}
	\label{algo:theory:generating_arrangements:planar:uar}
	\DontPrintSemicolon
	
	\SetKwProg{Fn}{Function}{ is}{end}
	\Fn{\textsc{Random\_Planar\_Arrangement}$(\Ftree)$} {
		\KwIn{$\Ftree$ a free tree.}
		\KwOut{A planar arrangement $\arr$ of $\Ftree$ chosen u.a.r.}
		
		$\arr \gets$ empty $n$-vertex arrangement\;
		$u\gets$ a vertex of $\Ftree$ chosen u.a.r. \;
		$\arr(u) \gets 1$\;
		
		$\Phi_u \gets$ a random permutation of $\neighs{u}$ \;
		
		$p\gets 2$ \;
		
		\For {$v\in\Phi_u$} {
			\tcp{Algorithm \ref{algo:theory:generating_arrangements:projective:uar:subtree}}
			$\textsc{Random\_Projective\_Arrangement\_Subtree}(\Rtree[u], v, p, \arr)$ \;
			$p\gets p + \Nvert{u}{v}$
		}
		\Return $\arr$
	}
\end{algorithm}
}

It is easy to see that Algorithm \ref{algo:theory:generating_arrangements:planar:uar} has time complexity $\bigO{n}$. Now we show that it generates planar arrangements uniformly at random. Firstly, choose a vertex, say $u\in V$, u.a.r., and place it at one of the arrangement's ends, say, the leftmost position; this vertex acts as a root for $\Ftree$. Secondly, choose u.a.r. one of the $\degree{u}!$ permutations of the segments of the subtrees $\SubRtree{v}[u]$ u.a.r. Lastly, recursively choose u.a.r. a projective linearization of every subtree $\SubRtree{v}[u]$ for $v\in\neighs{u}$ (Algorithm \ref{algo:theory:generating_arrangements:projective:uar:subtree}). These steps generate a planar arrangement u.a.r. since the probability of producing a certain planar arrangement following these steps is, then,
\begin{equation*}
\frac{1}{n}\frac{1}{\degree{u}!}\prod_{v\in \neighs{u}} \frac{1}{\NProj[\SubRtree{v}[u]]}
	= \frac{1}{n}\frac{1}{\degree{u}!}\prod_{v\in V\setminus\{u\}} \frac{1}{\degree{v}!}
	= \frac{1}{\NPlan}.
\end{equation*}
The equalities follow from Proposition \ref{prop:theory:counting_arrangements:amount_planar}.

\subsection{Expected sum of edge lengths}
\label{sec:theory:E_pl_D}

In this section we derive an arithmetic expression for $\ExpeDPlan$. First, we prove Theorem \ref{thm:introduction:E_pl_D}. To this aim, we define $\rexperfixed{\Vanchor{uv}}{\Root} = \rcondexpe{\Vanchor{uv}}{\arr(\Root)=1}$ as the expected value of $\Vanchor{uv}$ conditioned to the projective arrangements $\arr$ of $\Rtree$ such that $\arr(\Root)=1$; we define $\rexperfixed{\Vcoanchor{uv}}{\Root}$ likewise. The root is specified as a parameter of the expected value because we want to be able to use various roots. In the following proofs we rely heavily on Linearity of Expectation \citep[Theorem 2.1]{Mitzenmacher2017a} and the Law of Total Expectation \citep[Lemma 2.5]{Mitzenmacher2017a}.

%
\begin{proof}[Proof of Theorem \ref{thm:introduction:E_pl_D}]
We first prove Equation \ref{eq:introduction:E_pl_D:function_E_pr_D_fixed}. By the Law of Total Expectation,
\begin{equation*}
\ExpeDPlan = \sum_{u\in V} \condExpeDPlan{\arr(u)=1}\lprob{\arr(u)=1}.
\end{equation*}
Notice that, quite simply, that
\begin{equation*}
\condExpeDPlan{\arr(u)=1}
    = \condExpeDProj{\arr(u)=1}[\Rtree[u]]
    = \ExpeDProjFixedNoVertex[\Rtree[u]],
\end{equation*}
that is, the expected value of $D$ conditioned to planar arrangements of $\Ftree$ such that vertex $u$ is fixed at the leftmost position, $\condExpeDPlan{\arr(u)=1}$, is equal to the expected value of $D$ conditioned to projective arrangements of $\Rtree[u]$ such that vertex $u$ is fixed at the leftmost position, which is denoted as $\ExpeDProjFixedNoVertex[\Rtree[u]]$. By noticing, given a fixed vertex $u$, that $\lprob{\arr(u)=1} = \frac{1}{n}$, which is the proportion of planar arrangements of $\Ftree$ in which $\arr(u)=1$ (Proposition \ref{prop:theory:counting_arrangements:amount_planar}), Equation \ref{eq:introduction:E_pl_D:function_E_pr_D_fixed} follows immediately. Notice Equation \ref{eq:introduction:E_pl_D:function_E_pr_D_fixed} expresses the expected value of $D$ conditioned to planar arrangements of a free tree $\Ftree$ as the average of each of the expected values of $D$ conditioned to projective arrangements of $\Rtree[u]$ (for all $u\in V$) such that the root is fixed at the leftmost position.

Now we aim to write $\ExpeDProjFixedNoVertex[\Rtree[u]]$ as a function of $\ExpeDProj[\Rtree[u]]$. We start by decomposing $\ExpeDProjFixedNoVertex[\Rtree[u]]$ into a summation of expected values of the individual edge lengths, and group the edges of every subtree $\SubRtree{v}[u]$ of $\Rtree[u]$ (where $uv$ is a (directed) edge of the tree) into one single expected value for each subtree and leave the edges incident to the root $u$ in the same summation as follows
\begin{equation*}
\ExpeDProjFixedNoVertex[\Rtree[u]]
	=
	\sum_{vw\in \neighs{u}}
	\left(
		\rexpefixed{\Vd{vw}}{u} +
		\ExpeDProj[\SubRtree{v}[u]]
	\right).
\end{equation*}
Now, it is important to notice that we did not write $\ExpeDProjFixedNoVertex[\SubRtree{v}[u]]$ in the summation above since the conditioning imposed by the operator $\fixed$ in $\ExpeDProjFixedNoVertex[\Rtree[u]]$ only applies to the root $u$. The root of the subtrees can be placed freely in the arrangement as long as the result is projective. Now we decompose all (directed) edges $uv$ of $\Rtree$ in the first summation into anchor and coanchor, and we get
\begin{equation*}
\ExpeDProjFixedNoVertex[\Rtree[u]]
	=
	\sum_{v\in \neighs{u}}
	\left(
		\rexpefixed{\Vanchor{uv} + \Vcoanchor{uv}}{u} + \ExpeDProj[\SubRtree{v}[u]]
	\right).
\end{equation*}
Although the root $u$ is clear in this context, we have made it explicit in $\rexpefixed{\Vanchor{uv} + \Vcoanchor{uv}}{u}$ so as to be able to keep track of it in the following derivations. By linearity of expectation,
\begin{equation*}
\rexpefixed{\Vanchor{uv} + \Vcoanchor{uv}}{u} = \rexpefixed{\Vanchor{uv}}{u} + \rexpefixed{\Vcoanchor{uv}}{u}.
\end{equation*}
Now, notice that the length of the anchor of any given directed edge $(u,v)$, where $u$ is the head and $v$ is the dependent, is invariant to the position of $u$, that is, it only changes if we change the position of $v$ within its interval. Therefore, fixing the head to the leftmost position of the arrangement (or any position outside the segment of $v$) does not affect the value of $\rexpefixed{\Vanchor{uv}}{u}$ and we simply have that $\rexpefixed{\Vanchor{uv}}{u} = \rcondexpe{\Vanchor{uv}}{u}$ and thus
\begin{equation*}
\ExpeDProjFixedNoVertex[\Rtree[u]]
	=
	\sum_{v\in \neighs{u}}
	\left(
		\rcondexpe{\Vanchor{uv}}{u}
		+ \rexpefixed{\Vcoanchor{uv}}{u}
		+ \ExpeDProj[\SubRtree{v}[u]]
	\right).
\end{equation*}
The next step is to find the value of $\rexpefixed{\Vcoanchor{uv}}{u}$. Notice now that the length of the coanchor of any directed edge $(u,v)$ {\em is} affected by the position of the head $u$ and, as such, $\rexpefixed{\Vcoanchor{uv}}{u}$ need not be exactly equal to $\rcondexpe{\Vcoanchor{uv}}{u}$. The derivation is found in to the Appendix since it is merely an adaptation of the proof by \citet[Lemma 1]{Alemany2022b}; it gives
\begin{equation*}
\rexpefixed{\Vcoanchor{uv}}{u} = \frac{3}{2}\rcondexpe{\Vcoanchor{uv}}{u}.
\end{equation*}
Thus,
\begin{align}
\ExpeDProjFixedNoVertex[\Rtree[u]]
	&=
	\sum_{v\in \neighs{u}}
	\left(
		\rcondexpe{\Vanchor{uv}}{u} + \frac{3}{2}\rcondexpe{\Vcoanchor{uv}}{u} + \ExpeDProj[\SubRtree{v}[u]]
	\right) \nonumber\\
	&=
	\sum_{v\in \neighs{u}}
	\left(
		\rcondexpe{\Vd{uv}}{u} +
		\ExpeDProj[\SubRtree{v}[u]] +
		\frac{1}{2}\rcondexpe{\Vcoanchor{uv}}{u}
	\right) \nonumber\\
	&=
	\label{eq:theory:E_pl_D:formula:E_pr_beta_fixed}
	\ExpeDProj[\Rtree[u]] + \frac{1}{2} \sum_{v\in \neighs{u}} \rcondexpe{\Vcoanchor{uv}}{u}.
\end{align}
In the third equality we have used the identity by \citet[Equation 28]{Alemany2022b}, which states that in a rooted tree $\Rtree$
\begin{equation*}
\ExpeDProj =
	\sum_{v\in\neighs{\Root}} 
	\left(
		\rexpe{\Vd{\Root v}} +
		\ExpeDProj[\SubRtree{v}[\Root]]
	\right).
\end{equation*}
In this equation, we have not specified the expected values as being conditioned by the root $\Root$ since this is clear from the context. Plugging Equation \ref{eq:theory:E_pl_D:formula:E_pr_beta_fixed} into Equation \ref{eq:introduction:E_pl_D:function_E_pr_D_fixed} we get
\begin{equation}
\label{eq:theory:E_pl_D:formula:before_meaning_double_summation}
\ExpeDPlan
	=
	\frac{1}{2n}
	\sum_{u\in V}\sum_{v\in \neighs{u}} \rcondexpe{\Vcoanchor{uv}}{u}
	+
	\frac{1}{n}\sum_{u\in V} \ExpeDProj[\Rtree[u]].
\end{equation}
We can use the following result by \citet[Equation 16]{Alemany2022b}
\begin{equation*}
\rcondexpe{\Vcoanchor{uv}}{u}
    = \frac{\Nvert{u}{u} - \Nvert{u}{v} - 1}{3}
    = \frac{n - \Nvert{u}{v} - 1}{3}
\end{equation*}
to further simplify Equation \ref{eq:theory:E_pl_D:formula:before_meaning_double_summation} and, after proving that
\begin{align*}
\sum_{v\in \neighs{u}} \rcondexpe{\Vcoanchor{uv}}{u}
	&= \sum_{v\in \neighs{u}} \frac{\Nvert{u}{u} - \Nvert{u}{v} - 1}{3}
	= \frac{(n - 1)(\degree{u} - 1)}{3}, \\
\sum_{u\in V} \frac{1}{3}(n - 1)(\degree{u} - 1)
	&= \frac{(n - 1)(n - 2)}{3},
\end{align*}
we obtain
\begin{equation}
\label{eq:E_pl_D:formula:meaning_double_summation}
\frac{1}{2n}
	\sum_{u\in V} \sum_{v\in \neighs{u}} \rcondexpe{\Vcoanchor{uv}}{u}
	=
	\frac{(n - 1)(n - 2)}{6n}.
\end{equation}
Hence Equation \ref{eq:introduction:E_pl_D:function_E_pr_D}.
\end{proof}
%

For the sake of comprehensiveness, we also provide an arithmetic expression for the expected length of an edge $uv$ of a free tree in uniformly random planar arrangements. To this aim, we further define $\lexpefixed{\Vd{uv}}{\Root} = \lcondexpe{\Vd{uv}}{\arr(\Root)=1}$ to be the expected value of the length of edge $uv\in E(\Ftree)$ when the vertex $\Root\in V(\Ftree)$ is fixed to the leftmost position in planar arrangements of $\Ftree$. Similarly, given a rooting of $\Ftree$ at $\Root$, let $\rexpefixed{\Vd{uv}}{\Root} = \rcondexpe{\Vd{uv}}{\arr(\Root)=1}$ to be the expected value of the length of edge $uv\in E(\Rtree)$ when vertex $\Root$ acts as the root of the tree and it is fixed to the leftmost position in projective arrangements of $\Rtree$. The root vertex $\Root$ may be one of vertices $u$, $v$ or none of the two. In the expected value $\rexpefixed{\Vd{uv}}{\Root}$ we assume that the edge $uv$ is directed from $u$ to $v$ in accordance with the orientation defined by the root vertex $\Root$. Therefore, when $\Root$ is neither $u$ or $v$, the vertex of edge $uv$ closest to $\Root$ is always vertex $u$, and the farthest is always vertex $v$.

%
\begin{lemma}
\label{lemma:theory:E_pl_D:formula:expected_length_edge}
Given a free tree $\Ftree=(V,E)$, for any $uv\in E$ it holds that
\begin{equation}
\label{eq:theory:E_pl_D:formula:expected_length_edge}
\lexpe{\Vd{uv}} =
	1 +
	\frac{1}{n}
	\sum_{\Root\in V\setminus\{u,v\}}
		\rexper{\Vd{uv}}{\Root},
\end{equation}
where \citep{Alemany2022b}
\begin{equation}
\label{eq:theory:E_pl_D:formula:expected_length_edge:last_step:middle}
\rexper{\Vd{uv}}{\Root}
	= \frac{2\Nvert{\Root}{u} + \Nvert{\Root}{v} + 1}{6}.
\end{equation}
\end{lemma}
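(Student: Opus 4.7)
The plan is to apply the Law of Total Expectation by conditioning on which vertex occupies position 1 in a uniformly random planar arrangement, mirroring the strategy of the proof of Theorem \ref{thm:introduction:E_pl_D}. By Proposition \ref{prop:theory:counting_arrangements:amount_planar} one has $\lprob{\arr(\Root)=1} = 1/n$ for every $\Root \in V$, and a uniformly random planar arrangement of $\Ftree$ conditioned on $\arr(\Root)=1$ coincides with a uniformly random projective arrangement of $\Rtree[\Root]$ with $\arr(\Root)=1$; hence
\begin{equation*}
\lexpe{\Vd{uv}} = \frac{1}{n}\sum_{\Root \in V} \rexperfixed{\Vd{uv}}{\Root}.
\end{equation*}
I would then split the sum into the two terms with $\Root\in\{u,v\}$ and the remaining $n-2$ terms, and prove (i) $\rexperfixed{\Vd{uv}}{u} + \rexperfixed{\Vd{uv}}{v} = n$, and (ii) $\rexperfixed{\Vd{uv}}{\Root} = \rexper{\Vd{uv}}{\Root}$ for every $\Root \notin \{u,v\}$. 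Plugging these into the display above gives exactly Equation \ref{eq:theory:E_pl_D:formula:expected_length_edge}, while Equation \ref{eq:theory:E_pl_D:formula:expected_length_edge:last_step:middle} is quoted directly from \citet{Alemany2022b}.

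For claim (i), take $\Root = u$: then $v$ is a child of $u$ in $\Rtree[u]$, so the anchor-coanchor decomposition gives $\rexperfixed{\Vd{uv}}{u} = \rcondexpe{\Vanchor{uv}}{u} + \rexperfixed{\Vcoanchor{uv}}{u}$. The anchor is a function of the internal arrangement of $\SubRtree{v}[u]$ alone, whose distribution is not affected by fixing $u$ at position 1; by the reflection symmetry of projective arrangements of a rooted tree, the position of $v$ within its own segment is symmetric around the midpoint, giving $\rcondexpe{\Vanchor{uv}}{u} = (\Nvert{u}{v}+1)/2$. For the coanchor I would reuse both the identity $\rexperfixed{\Vcoanchor{uv}}{u} = \tfrac{3}{2}\rcondexpe{\Vcoanchor{uv}}{u}$ proved inside Theorem \ref{thm:introduction:E_pl_D} and the value $\rcondexpe{\Vcoanchor{uv}}{u} = (n - \Nvert{u}{v} - 1)/3$ from \citet{Alemany2022b}, obtaining $\rexperfixed{\Vcoanchor{uv}}{u} = (n-\Nvert{u}{v}-1)/2$ and hence $\rexperfixed{\Vd{uv}}{u} = n/2$; exchanging the roles of $u$ and $v$ gives the same value for $\rexperfixed{\Vd{uv}}{v}$.

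For claim (ii), when $\Root \notin \{u,v\}$ the edge $uv$ is not incident to the root of $\Rtree[\Root]$. Letting $w$ denote whichever of $u,v$ is the parent of the other in $\Rtree[\Root]$, the length of $uv$ is determined solely by the relative positions of $u$ and $v$ within the segment of $\SubRtree{w}$, and by the recursive segment characterization of Section \ref{sec:theory:counting_arrangements} this internal arrangement is a uniformly random projective arrangement of $\SubRtree{w}$ irrespective of whether $\Root$ is fixed at position 1, since fixing $\Root$ only constrains the top-level permutation $\Phi_{\Root}$ and does not alter how each descendant subtree is recursively linearized. Therefore $\rexperfixed{\Vd{uv}}{\Root} = \rexper{\Vd{uv}}{\Root}$. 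The main obstacle is formalizing this last decoupling step cleanly, perhaps as a short auxiliary lemma asserting that the arrangement induced on any strict subtree of $\Rtree[\Root]$ has the same distribution with or without the root fixed at a boundary position; the rest of the argument is a direct bookkeeping exercise reusing calculations already established for Theorem \ref{thm:introduction:E_pl_D}.
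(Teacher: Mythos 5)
Your proposal follows essentially the same route as the paper's proof: the Law of Total Expectation over the vertex fixed at position 1, the case split between $\Root\in\{u,v\}$ (where the anchor--coanchor decomposition together with $\rexperfixed{\Vcoanchor{uv}}{\Root}=\tfrac{3}{2}\rexper{\Vcoanchor{uv}}{\Root}$ yields $n/2$ for each of the two terms) and $\Root\notin\{u,v\}$ (where fixing the root does not affect the distribution of $\Vd{uv}$), and the final reassembly are all exactly the steps the paper takes. The decoupling step you flag as the main obstacle is in fact simply asserted in the paper's proof, so your sketch is, if anything, slightly more explicit about why that equality holds.
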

\begin{proof}
Following the characterization of planar arrangements described in Section \ref{sec:theory:counting_arrangements}, we have that $\lprob{\arr(\Root)=1}=1/n$. Then applying the Law of Total Expectation
\begin{equation}
\label{eq:theory:E_pl_D:formula:expected_length_edge:law_total_expectation}
\lexpe{\Vd{uv}}
	= \sum_{\Root\in V} \lcondexpe{\Vd{uv}}{\arr(\Root)=1}\lprob{\arr(\Root)=1}
	= \frac{1}{n}\sum_{\Root\in V} \lexpefixed{\Vd{uv}}{\Root}.
\end{equation}
Now we calculate $\lexpefixed{\Vd{uv}}{\Root}$ by cases. When $\Root\notin\{u,v\}$, 
\begin{equation}
\label{eq:theory:E_pl_D:formula:expected_length_edge:second_case}
\lexpefixed{\Vd{uv}}{\Root}
	= \rexpefixed{\Vd{uv}}{\Root}
	= \rcondexpe{\Vd{uv}}{\Root}.
\end{equation}
When $\Root\in\{u,v\}$, by linearity of expectation, 
\begin{equation*}
\lexpefixed{\Vd{uv}}{\Root}
	= \rexperfixed{\Vd{uv}}{\Root}
	= \rexperfixed{\Vanchor{uv} + \Vcoanchor{uv}}{\Root}
	= \rexperfixed{\Vanchor{uv}}{\Root} + \rexperfixed{\Vcoanchor{uv}}{\Root}.
\end{equation*}
By denoting $\overline{\Root}$ the only vertex in $\{u,v\}\setminus\{\Root\}$, then
\begin{equation}
\label{eq:theory:E_pl_D:formula:expected_length_edge:anchor:root_and_compl}
\rexperfixed{\Vanchor{uv}}{\Root}
	= \rexper{\Vanchor{uv}}{\Root}
	= \frac{\Nvert{\Root}{\overline{\Root}} + 1}{2}.
\end{equation}
Equation \ref{eq:theory:E_pl_D:formula:expected_length_edge:anchor:root_and_compl} relies on the fact that in a rooted tree $\Rtree$, the expected length of the anchor of an edge incident to the root, say $\Root w\in E(\Rtree)$, is given by $\rexper{\Vanchor{\Root w}}{\Root} = (\Nvert{\Root}{w} + 1)/2$ \citep{Alemany2022b}. An arithmetic expression for $\rexperfixed{\Vcoanchor{uv}}{\Root}$ can be found by modifying the proof of \citet[Lemma 1]{Alemany2022b}. Then, as before, we get (see Appendix),
\begin{equation}
\label{eq:theory:E_pl_D:formula:expected_length_edge:coanchor}
\rexperfixed{\Vcoanchor{uv}}{\Root}
	= \frac{3}{2}\rexper{\Vcoanchor{uv}}{\Root}
	= \frac{n - \Nvert{\Root}{\overline{\Root}} - 1}{2}.
\end{equation}
Therefore, by adding Equations \ref{eq:theory:E_pl_D:formula:expected_length_edge:anchor:root_and_compl} and \ref{eq:theory:E_pl_D:formula:expected_length_edge:coanchor} we obtain 
\begin{equation}
\label{eq:theory:E_pl_D:formula:expected_length_edge:first_case}
\lexpefixed{\Vd{uv}}{\Root}
	= \rexperfixed{\Vanchor{uv}}{\Root} + \rexperfixed{\Vcoanchor{uv}}{\Root}
	= \frac{\Nvert{\Root}{\overline{\Root}} + 1}{2} +
	  \frac{n - \Nvert{\Root}{\overline{\Root}} - 1}{2}
	= \frac{n}{2}.
\end{equation}
Equation \ref{eq:theory:E_pl_D:formula:expected_length_edge} follows immediately after inserting Equations \ref{eq:theory:E_pl_D:formula:expected_length_edge:first_case} and \ref{eq:theory:E_pl_D:formula:expected_length_edge:second_case} in  Equation \ref{eq:theory:E_pl_D:formula:expected_length_edge:law_total_expectation}.
\end{proof}
%

\section{Applications}
\label{sec:applications}

\subsection{A linear-time algorithm to compute $\ExpeDPlan$}
\label{sec:applications:algorithm}

Here we consider algorithms of increasing efficiency. First, since $\ExpeDProj[\Rtree[u]]$ can be calculated in $\bigO{n}$-time for any $n$-vertex rooted tree $\Rtree[u]$ \citep[Theorem 1]{Alemany2022b}, the evaluation `as is' of Equation \ref{eq:introduction:E_pl_D:function_E_pr_D} leads to an $\bigO{n^2}$-time algorithm.

Second, we could calculate the value $\ExpeDProj[\Rtree[u]]$ for all $u\in V$ in $\bigO{n}$-time and $\bigO{n}$-space with the following procedure:
\begin{enumerate}
\item Precompute $\Nvert{u}{v}$ in $\bigO{n}$-time \citep{Alemany2022a};
\item Choose an arbitrary vertex $w$;
\item Calculate $\ExpeDProj[\Rtree[w]]$ in $\bigO{n}$-time \citep{Alemany2022b}; and, finally,
\item Perform a Breadth First Search (BFS) traversal of $\Ftree$ starting at $w$. In this traversal, when going from vertex $u$ to vertex $v$, the value of $\ExpeDProj[\Rtree[v]]$ is calculated applying the precomputed value of $\ExpeDProj[\Rtree[u]]$ to Equation
\begin{equation*}
\ExpeDProj[\Rtree[u]] = \ExpeDProj[\Rtree[v]] + \Delta,
\end{equation*}
where $\Delta$ is equal to the difference $\ExpeDProj[\Rtree[u]] - \ExpeDProj[\Rtree[v]]$. We can obtain a formula for this difference by manipulating Equation \ref{eq:introduction:E_pr_D}. We get
\begin{align*}
\Delta
	&= \ExpeDProj[\Rtree[u]] - \ExpeDProj[\Rtree[v]] \\
	&= \frac{1}{6} \left[\Nvert{u}{v}\left(2\degree{v} - 1\right) 
		+ 2n\left(\degree{u} - \degree{v}\right) 
		- \Nvert{v}{u}\left(2\degree{u} - 1\right) \right].
\end{align*}
Notice that the value of $\Delta$ can be computed in constant time for any two vertices $u$ and $v$ (here we are interested in the value of $\Delta$ for pairs of adjacent vertices) and, crucially, without knowledge of either $\ExpeDProj[\Rtree[u]]$ or $\ExpeDProj[\Rtree[v]]$. That is, if the value of $\ExpeDProj[\Rtree[u]]$ is known then the value of $\ExpeDProj[\Rtree[v]]$ for any $v\in\neighs{u}$ can be calculated in constant time as
\begin{equation*}
\ExpeDProj[\Rtree[v]] = \ExpeDProj[\Rtree[u]] - \Delta.
\end{equation*}
\end{enumerate}

Third, we propose an alternative that is also $\bigO{n}$-time yet simpler and faster in practice, based on Proposition \ref{prop:applications:algorithm}.

%
\begin{proposition}
\label{prop:applications:algorithm}
Given a free tree $\Ftree=(V,E)$,
\begin{equation}
\label{eq:algorithm:linear_time_algorithm}
\ExpeDPlan
	=
	\frac{(n-1)(3n^2 + 2n - 2)}{6n}
	-
	\frac{1}{6n}
	\sum_{v\in V}
		(2\degree{v} - 1)
		\sum_{u\in \neighs{v}}\Nvert{v}{u}^2.
\end{equation}
\end{proposition}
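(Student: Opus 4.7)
The plan is to combine Theorem \ref{thm:introduction:E_pl_D} with the closed form for $\ExpeDProj[\Rtree[u]]$ given by Equation \ref{eq:introduction:E_pr_D}, interchange the resulting double sum so that the outer index ranges over vertices $w$ rather than roots $u$, and reduce the inner sum to something that depends only on the free tree $\Ftree$ through the directional sizes on its edges.

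First, I would substitute Equation \ref{eq:introduction:E_pr_D} into Equation \ref{eq:introduction:E_pl_D:function_E_pr_D} to obtain
\[
\ExpeDPlan = \frac{(n-1)(n-2)}{6n} + \frac{1}{6n}\sum_{u\in V}\sum_{w\in V}\Nvert{u}{w}(2\outdegree{w}[u]+1) - \frac{1}{6}.
\]
The key observation is that, when $\Ftree$ is rooted at $u$, one has $\outdegree{w}[u] = \degree{w}$ if $w=u$ and $\outdegree{w}[u] = \degree{w}-1$ otherwise. Pulling the $w=u$ term out of the inner sum contributes an extra $2\Nvert{u}{u} = 2n$ per root $u$, so
\[
\sum_{w\in V}\Nvert{u}{w}(2\outdegree{w}[u]+1) = 2n + \sum_{w\in V}\Nvert{u}{w}(2\degree{w}-1).
\]
Swapping the order of summation then gives
\[
\sum_{u\in V}\sum_{w\in V}\Nvert{u}{w}(2\outdegree{w}[u]+1) = 2n^2 + \sum_{w\in V}(2\degree{w}-1)\sum_{u\in V}\Nvert{u}{w}.
\]

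The main technical step is to evaluate $\sum_{u\in V}\Nvert{u}{w}$ for a fixed $w$. I would partition the roots $u\neq w$ according to which neighbor $x\in\neighs{w}$ separates $u$ from $w$: every such $u$ lies on the $x$-side of the edge $wx$, so $\Nvert{u}{w} = \Nvert{x}{w} = n - \Nvert{w}{x}$, and there are $\Nvert{w}{x}$ such vertices. Adding the $u=w$ contribution $\Nvert{w}{w}=n$ and using $\sum_{x\in\neighs{w}}\Nvert{w}{x} = n-1$, I get
\[
\sum_{u\in V}\Nvert{u}{w} = n + \sum_{x\in\neighs{w}}\Nvert{w}{x}\bigl(n-\Nvert{w}{x}\bigr) = n^2 - \sum_{x\in\neighs{w}}\Nvert{w}{x}^2.
\]

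Finally, I would reassemble, using the handshake identity $\sum_{w\in V}(2\degree{w}-1) = 4(n-1)-n = 3n-4$ to collapse the $n^2$ term. The remaining polynomial in $n$ is
\[
(n-1)(n-2) + 3n^3 - 2n^2 - n = 3n^3 - n^2 - 4n + 2,
\]
which factors as $(n-1)(3n^2+2n-2)$, yielding Equation \ref{eq:algorithm:linear_time_algorithm}. The only delicate point is keeping the two directional sizes $\Nvert{w}{x}$ and $\Nvert{x}{w}$ straight while applying $\Nvert{w}{x}+\Nvert{x}{w}=n$; once that bookkeeping is handled, the remainder is routine algebra.
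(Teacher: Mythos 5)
Your proof is correct and follows essentially the same route as the paper's: substitute the projective closed form into Theorem \ref{thm:introduction:E_pl_D}, convert $\outdegree{w}[u]$ to $\degree{w}$, swap the order of summation, and evaluate $\sum_{u\in V}\Nvert{u}{w}=n^2-\sum_{x\in\neighs{w}}\Nvert{w}{x}^2$ by partitioning the roots $u\neq w$ according to the neighbor of $w$ separating them, which is exactly the paper's Equation \ref{eq:applications:algorithm:derivation_E_pl_D_linear:summation}. The only difference is cosmetic bookkeeping (you fold the diagonal $w=u$ term directly into the $(2\degree{w}-1)$ sum, whereas the paper routes the same cancellation through auxiliary functions $g$ and $h$), and your final polynomial $3n^3-n^2-4n+2=(n-1)(3n^2+2n-2)$ checks out.
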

\begin{proof}
Here we simplify the summation in Equation \ref{eq:introduction:E_pl_D:function_E_pr_D}, which becomes \citep{Alemany2022b} 
\begin{equation*}
\frac{1}{n}\sum_{u\in V}\ExpeDProj[\Rtree[u]]
	=
	\frac{1}{6n} \left( f(\Ftree) - n \right)
\end{equation*}
with 
\begin{equation*}
f(\Ftree) = \sum_{u\in V}\sum_{v\in V} \Nvert{u}{v}( \outdegree{v}[u] + 1 ).
\end{equation*}
Now we simplify $f(\Ftree)$ by first replacing the term $\outdegree{v}[u]$ by $\degree{v}$ after the necessary transformations so that we can swap the order of the summations afterwards, that is,
\begin{align}
f(\Ftree)
	&= 
	\sum_{u\in V}
	\left(
		\Nvert{u}{u}(2\outdegree{u}[u] + 1) + 
		\sum_{v\in V\setminus\{u\}} \Nvert{u}{v}( 2\outdegree{v}[u] + 1 )
	\right) \nonumber\\
	&= 
	\sum_{u\in V} n(2\degree{u} + 1) + 
	\sum_{u\in V}\sum_{v\in V\setminus\{u\}} \Nvert{u}{v}( 2\degree{v} - 1 ) \nonumber\\
	&= 
	n(5n - 4)
	- \sum_{u\in V} \Nvert{u}{u}( 2\degree{u} - 1 )
	+ 2\sum_{u\in V}\sum_{v\in V} \Nvert{u}{v}\degree{v} - \sum_{u\in V}\sum_{v\in V} \Nvert{u}{v} \nonumber\\
	&= 2n^2 + g(\Ftree) - h(\Ftree) \label{eq:applications:algorithm:introduction_g_h}
\end{align}
with
\begin{align}
\label{eq:applications:algorithm:derivation_E_pl_D:g}
	g(\Ftree) &= 2\sum_{u\in V}\sum_{v\in V} \Nvert{u}{v}\degree{v}, \\
\label{eq:applications:algorithm:derivation_E_pl_D:h}
	h(\Ftree) &= \sum_{u\in V}\sum_{v\in V} \Nvert{u}{v}.
\end{align}
In the preceding derivation, the second equality holds due to $\outdegree{v}[u]=\degree{v}-1$ for $v\neq u$; the third and fourth steps, we apply the Handshaking lemma.\footnote{The Handshaking lemma \citep{Gunderson2014a} states that the sum of the degrees of all vertices of a graph equals twice the number of its edges.} These lead to
\begin{equation}
\label{eq:applications:algorithm:derivation_E_pl_D:nice_El}
\frac{1}{n}\sum_{u\in V} \ExpeDProj[\Rtree[u]]
	=
	\frac{1}{6n}
	\left(
	n(2n - 1) + g(\Ftree) - h(\Ftree)
	\right).
\end{equation}
It remains to simplify Equations \ref{eq:applications:algorithm:derivation_E_pl_D:g} and \ref{eq:applications:algorithm:derivation_E_pl_D:h}. We start by changing the order of the summations in Equation \ref{eq:applications:algorithm:derivation_E_pl_D:g},
\begin{equation*}
g(\Ftree)
	= 2\sum_{v\in V}\sum_{u\in V} \Nvert{u}{v}\degree{v}
	= 2\sum_{v\in V}\degree{v}\sum_{u\in V} \Nvert{u}{v},
\end{equation*}
and continue simplifying the inner summation. Consider a fixed $v\in V$. We have that
\begin{equation*}
\underbrace{\sum_{u\in V} \Nvert{u}{v}}_{(1)}
	= n + \underbrace{\sum_{u\in V\setminus\{v\}} \Nvert{u}{v}}_{(2)}
	= n + \sum_{w\in \neighs{v}} \Nvert{w}{v}\Nvert{v}{w}.
\end{equation*}
The summation (1) adds up the size of all subtrees $\SubRtree{v}[w]$ with respect to a `moving' root $u$. In the first equality we have simply taken out the case $\Nvert{u}{u}$. To understand the second equality, focus for now on a single subtree $\SubRtree{w}[v]$ such that $wv\in E$. The summation (2) contains summands that correspond to all the vertices in $\SubRtree{w}[v]$, say vertices $u_1,\dots,u_k$ (assume, w.l.o.g., that $w=u_k$). These summands are $\Nvert{u_1}{v},\dots,\Nvert{u_k}{v}$ which are all equal to $\Nvert{w}{v}$ (Figure \ref{fig:applications:algorithm:summing_sizes_over_roots}). Moreover, there are $\Nvert{v}{w}$ vertices in $\SubRtree{w}[v]$ thus $k=\Nvert{v}{w}$, and this holds for all $w\in\neighs{v}$, hence the equality.
\begin{figure}
	\centering
\scalebox{1}{
\begin{tikzpicture}[ipe stylesheet]
  \draw
    (194.6667, 712.6667)
     .. controls (202.6667, 705.3333) and (217.3333, 698.6667) .. (230, 698)
     .. controls (242.6667, 697.3333) and (253.3333, 702.6667) .. (259.3333, 712.6667)
     .. controls (265.3333, 722.6667) and (266.6667, 737.3333) .. (260.6667, 746)
     .. controls (254.6667, 754.6667) and (241.3333, 757.3333) .. (231.3333, 755.3333)
     .. controls (221.3333, 753.3333) and (214.6667, 746.6667) .. (207.3333, 742.6667)
     .. controls (200, 738.6667) and (192, 737.3333) .. (188.6667, 732.6667)
     .. controls (185.3333, 728) and (186.6667, 720) .. cycle;
  \pic
     at (204, 724) {ipe disk};
  \pic
     at (228, 744) {ipe disk};
  \pic
     at (244, 720) {ipe disk};
  \node[ipe node]
     at (208, 728) {$u_1$};
  \node[ipe node]
     at (248, 724) {$u_3$};
  \node[ipe node]
     at (232, 748) {$u_2$};
  \pic
     at (232, 698) {ipe disk};
  \node[ipe node]
     at (228, 702) {$w$};
  \draw
    (232, 672)
     -- (232, 698);
  \pic
     at (232, 672) {ipe disk};
  \node[ipe node]
     at (236, 676) {$v$};
  \draw
    (232, 672)
     -- (204, 660);
  \draw
    (232, 672)
     -- (228, 648);
  \draw
    (232, 672)
     -- (248, 648);
  \draw
    (232, 672)
     -- (252, 664);
  \draw
    (268, 760)
     .. controls (272, 760) and (272, 758) .. (272, 756.3333)
     .. controls (272, 754.6667) and (272, 753.3333) .. (272, 750)
     .. controls (272, 746.6667) and (272, 741.3333) .. (272, 738)
     .. controls (272, 734.6667) and (272, 733.3333) .. (272, 732)
     .. controls (272, 730.6667) and (272, 729.3333) .. (272.6667, 728.6667)
     .. controls (273.3333, 728) and (274.6667, 728) .. (274.6667, 728)
     .. controls (274.6667, 728) and (273.3333, 728) .. (272.6667, 727.3333)
     .. controls (272, 726.6667) and (272, 725.3333) .. (272, 724)
     .. controls (272, 722.6667) and (272, 721.3333) .. (272, 718)
     .. controls (272, 714.6667) and (272, 709.3333) .. (272, 706)
     .. controls (272, 702.6667) and (272, 701.3333) .. (272, 699.6667)
     .. controls (272, 698) and (272, 696) .. (268, 696);
  \node[ipe node]
     at (280, 728) {$\Nvert{v}{w}$ vertices};
  \draw[ipe dash dashed]
    (268, 696)
     -- (184, 696);
  \draw[ipe dash dashed]
    (268, 760)
     -- (184, 760);
\end{tikzpicture}
}
	\caption{Proof of \ref{prop:applications:algorithm}. The value $\Nvert{u}{v}$ is the same for all vertices of $\SubRtree{w}[v]$ denoted as $\{u_1,\dots,u_k\}$ in the figure and the proof.}
	\label{fig:applications:algorithm:summing_sizes_over_roots}
\end{figure}
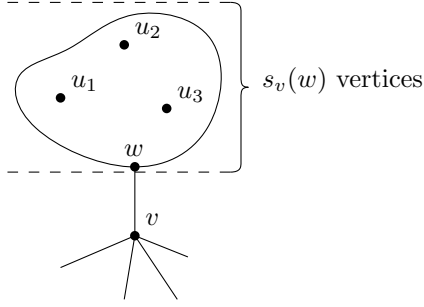
Finally,
\begin{equation}
\sum_{u\in V} \Nvert{u}{v}
	= n + \sum_{u\in \neighs{v}} (n - \Nvert{v}{u})\Nvert{v}{u}
	= n^2 - \sum_{u\in \neighs{v}}\Nvert{v}{u}^2,
	\label{eq:applications:algorithm:derivation_E_pl_D_linear:summation}
\end{equation}
thanks to the identity $\Nvert{u}{v}+\Nvert{v}{u}=n$. Then,
\begin{equation}
\label{eq:applications:algorithm:derivation_E_pl_D:g:final}
g(\Ftree)
	= 4n^2(n-1) - 2\sum_{v\in V}\degree{v}\sum_{u\in \neighs{v}}\Nvert{v}{u}^2.
\end{equation}
We use the result in Equation \ref{eq:applications:algorithm:derivation_E_pl_D_linear:summation} to simplify Equation \ref{eq:applications:algorithm:derivation_E_pl_D:h},
\begin{equation}
\label{eq:applications:algorithm:derivation_E_pl_D:h:final}
h(\Ftree)
	= \sum_{v\in V}\sum_{u\in V} \Nvert{u}{v}
	= n^3 - \sum_{v\in V}\sum_{u\in \neighs{v}}\Nvert{v}{u}^2.
\end{equation}
By combining Equations \ref{eq:applications:algorithm:derivation_E_pl_D:g:final} and \ref{eq:applications:algorithm:derivation_E_pl_D:h:final} into Equation \ref{eq:applications:algorithm:derivation_E_pl_D:nice_El} and, after some effort, we obtain
\begin{equation*}
\ExpeDPlan
	= \frac{(n-1)(n-2)}{6n}
	+ \frac{1}{6n}
	  \left(
		n(n-1)(3n+1)
		-
		\sum_{v\in V}(2\degree{v} - 1) \sum_{u\in \neighs{v}} \Nvert{v}{u}^2
	  \right)
\end{equation*}
which leads directly to Equation \ref{eq:algorithm:linear_time_algorithm}.
\end{proof}
%

%
\begin{lemma}
\label{lemma:applications:algorithm:linear_time_computable}
For any given free tree $\Ftree$, Algorithm \ref{algo:applications:algorithm} calculates $\ExpeDPlan$ in time and space $\bigO{n}$.
\end{lemma}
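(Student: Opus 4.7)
The plan is to show, using the closed-form of Proposition \ref{prop:applications:algorithm}, that every quantity appearing on the right-hand side of Equation \ref{eq:algorithm:linear_time_algorithm} can be computed in linear time and linear space, and that their combination is then a constant-time arithmetic expression.

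First I would precompute, in a single preprocessing pass, all $2(n-1)$ directional sizes $\Nvert{v}{u}$ over oriented edges $uv\in E$ using the $\bigO{n}$-time and $\bigO{n}$-space algorithm of \citet{Alemany2022a}. In the same pass, or by a trivial sweep over the adjacency list of $\Ftree$, I would record the degree $\degree{v}$ of each vertex $v\in V$; this is also clearly $\bigO{n}$ in both time and space.

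Next I would compute the sum
\begin{equation*}
S \;=\; \sum_{v\in V}(2\degree{v} - 1) \sum_{u\in \neighs{v}} \Nvert{v}{u}^2
\end{equation*}
by iterating over each vertex $v$, accumulating the inner sum through one traversal of its neighbour list, multiplying by $(2\degree{v}-1)$, and adding the result to a running total. The total work across the double loop is
\begin{equation*}
\sum_{v\in V}\degree{v} \;=\; 2|E| \;=\; 2(n-1) \;=\; \bigO{n}
\end{equation*}
by the Handshaking lemma, and only $\bigO{1}$ extra space is needed beyond what the preprocessing already uses.

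Finally I would evaluate $\frac{(n-1)(3n^2+2n-2)}{6n} - \frac{S}{6n}$ in constant time, which by Proposition \ref{prop:applications:algorithm} equals $\ExpeDPlan$. The correctness is therefore immediate from that proposition, and the complexity bound follows by adding the $\bigO{n}$ contributions above. There is no genuine obstacle: the only non-trivial subroutine, the batch computation of the directional sizes, is entirely delegated to the cited linear-time procedure, leaving only routine bookkeeping to verify.
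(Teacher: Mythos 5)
Your proposal is correct and follows essentially the same route as the paper's proof: precompute the directional sizes $\Nvert{v}{u}$ with the cited linear-time procedure, accumulate the degree-weighted sums of squares over all vertex--neighbour pairs (which is $\bigO{n}$ by the Handshaking lemma), and evaluate the closed form of Proposition \ref{prop:applications:algorithm} in constant time. The paper's own argument is just a terser version of the same three steps.
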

\begin{proof}
The pseudocode to calculate $\ExpeDPlan$ based on Proposition \ref{prop:applications:algorithm} is given in Algorithm \ref{algo:applications:algorithm}. This algorithm first calculates $\Nvert{u}{v}$ for all edges $uv\in E$, for the given tree $\Ftree$ in $\bigO{n}$ time using the pseudocode by \citet[Algorithm 2.1]{Alemany2022a}. Then it uses these values to calculate the sums of $\Nvert{v}{u}^2$ for every vertex $v\in V$. Such sums are then used to evaluate Equation \ref{eq:algorithm:linear_time_algorithm} hence calculating $\ExpeDPlan$ in time $\bigO{n}$. 
\end{proof}
%

{\small
\begin{algorithm}
	\caption{Calculation of $\ExpeDPlan$. Cost $\bigO{n}$-time, $\bigO{n}$-space.}
	\label{algo:applications:algorithm}
	\DontPrintSemicolon
	
	\SetKwProg{Fn}{Function}{ is}{end}
	\Fn{\textsc{compute\_expected\_planar}$(\Ftree)$} {
		\KwIn{$\Ftree$ free tree.}
		\KwOut{$\ExpeDPlan$.}
		\tcp{{\small \citealp[Algorithm 2.1]{Alemany2022a}}}
		$S\gets$\textsc{compute\_s\_ft}($\Ftree$) \;
		$L \gets \{0\}^n$ \tcp{{\small a vector of $n$ zeroes.}}
		\lFor {$( u,v, \Nvert{u}{v}) \in S$} {
			$L[u] \gets L[u] + \Nvert{u}{v}^2$
		}
		\Return $((n - 1)(3n^2 + 2n - 2) - \sum_{u\in V} (\degree{u} - 1)L[u])/6n$
	}
\end{algorithm}
}

\subsubsection{A simple application}
\label{sec:applications:algorithm:simple_extension}

Let $\ExpeDgek[1]$ be the expected value of the sum of edge lengths conditioned to arrangements $\arr$ such that $\Cr{\Ftree}\ge 1$. That is, arrangements such that the number of edge crossings is at least $1$. An immediate consequence of Lemma \ref{lemma:applications:algorithm:linear_time_computable} is that $\ExpeDgek[1]$ can be computed easily as the following corollary states.

\begin{corollary}
For any free tree $\Ftree$, $\ExpeDgek[1]$ can be computed in time and space $\bigO{n}$ thanks to the fact that
\begin{equation}
\label{eq:applications:algorithm:simple_extension:E_D_C_ge_1}
\ExpeDgek[1] = \frac{\ExpeDUnc - \ExpeDPlan\prob{\VCr{\Ftree} = 0}}{\prob{\VCr{\Ftree}\ge 1}}
\end{equation}
with $\prob{\VCr{\Ftree}\le 0} = \NPlan/n!$ and $\prob{\VCr{\Ftree}\ge 1} = (n! - \NPlan)/n!$.
\end{corollary}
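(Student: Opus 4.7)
The plan is to derive the formula by a direct application of the Law of Total Expectation, partitioning the sample space of all $n!$ unconstrained arrangements of $\Ftree$ according to whether $\VCr{\Ftree} = 0$ or $\VCr{\Ftree} \ge 1$. The key observation is that, by the definition given in the introduction, an arrangement $\arr$ of $\Ftree$ is planar if and only if $\Cr{\Ftree} = 0$, so conditioning on $\VCr{\Ftree}=0$ within the uniform distribution over all permutations yields exactly the uniform distribution over planar arrangements. In particular, $\expe{\VD{\Ftree} \mid \VCr{\Ftree}=0} = \ExpeDPlan$, and the conditional distribution on $\VCr{\Ftree} \ge 1$ has expectation $\ExpeDgek[1]$ by definition.

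Writing the total expectation decomposition gives
\begin{equation*}
\ExpeDUnc = \ExpeDPlan \cdot \prob{\VCr{\Ftree} = 0} + \ExpeDgek[1] \cdot \prob{\VCr{\Ftree} \ge 1},
\end{equation*}
and solving for $\ExpeDgek[1]$ (which is well-defined whenever $\prob{\VCr{\Ftree} \ge 1} > 0$, i.e., whenever not every arrangement of $\Ftree$ is planar) yields Equation \ref{eq:applications:algorithm:simple_extension:E_D_C_ge_1}. The probabilities then follow from the fact that the uniform distribution over all $n!$ arrangements assigns probability $1/n!$ to each, so $\prob{\VCr{\Ftree} = 0} = |\Plan[\Ftree]|/n! = \NPlan/n!$ by Proposition \ref{prop:theory:counting_arrangements:amount_planar}, and the complementary probability is $(n! - \NPlan)/n!$.

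For the complexity claim, I would observe that $\ExpeDUnc = (n^2 - 1)/3$ is computed in constant time (Equation \ref{eq:introduction:E_D}); $\ExpeDPlan$ is computed in $\bigO{n}$ time and space by Lemma \ref{lemma:applications:algorithm:linear_time_computable}; and the ratio $\NPlan/n! = \prod_{u \in V} \degree{u}!/(n-1)!$ from Proposition \ref{prop:theory:counting_arrangements:amount_planar} can be assembled in $\bigO{n}$ arithmetic operations by a single pass over the vertices accumulating degrees. Combining these values via Equation \ref{eq:applications:algorithm:simple_extension:E_D_C_ge_1} therefore runs in $\bigO{n}$ time and space. There is no real obstacle here; the only subtlety worth flagging is the degenerate case where every arrangement is planar (so the denominator vanishes), which happens precisely for trees with at most two vertices, and this case can either be excluded or handled separately.
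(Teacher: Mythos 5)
Your proposal is correct and follows essentially the same route as the paper: the Law of Total Expectation applied to the partition $\{\VCr{\Ftree}=0\}\cup\{\VCr{\Ftree}\ge 1\}$ of the uniform distribution over all $n!$ arrangements, combined with Proposition \ref{prop:theory:counting_arrangements:amount_planar} for the probabilities and Lemma \ref{lemma:applications:algorithm:linear_time_computable} for the $\bigO{n}$ cost. One small correction to your closing aside: the denominator $\prob{\VCr{\Ftree}\ge 1}$ vanishes precisely when $\Ftree$ is a star tree (all edges share the hub, so no two edges can ever cross, and indeed $\NPlan = n\cdot(n-1)! = n!$ there), not only for trees with at most two vertices; this includes, for instance, the path on three vertices.
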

\begin{proof}
Due to the Law of Total Expectation,
\begin{equation}
\label{eq:applications:algorithm:simple_extension:E_D_C_ge_1_aux}
\ExpeDUnc =
    \ExpeDPlan\prob{\VCr{\Ftree} = 0} +
    \ExpeDgek[1]\prob{\VCr{\Ftree}\ge 1},
\end{equation}
and hence Equation \ref{eq:applications:algorithm:simple_extension:E_D_C_ge_1}. $\NPlan$ can be computed in $\bigO{n}$-time with Equation \ref{eq:theory:counting_arrangements:amount_projective} and $\ExpeDPlan$ can be computed in time and space $\bigO{n}$ (Lemma \ref{lemma:applications:algorithm:linear_time_computable}). Hence all the components in the r.h.s. of Equation \ref{eq:applications:algorithm:simple_extension:E_D_C_ge_1} can be computed in time and space $\bigO{n}$.
\end{proof}

\subsection{Real syntactic dependency distances versus random baselines}
\label{sec:applications:corpora}

Evidence that dependency distances are smaller than expected by chance can be obtained by random baselines of varying strength 
\begin{itemize}
\item None, $\ExpeDUnc$, the expectation of $D(T)$ in unconstrained random linear arrangements \citep{Ferrer2004a},
\item Planarity, $\ExpeDPlan$, the expectation of $D(T)$ in planar random linear arrangements (this article),
\item Projectivity$, \ExpeDProj$, the expectation of $D(T)$ in projective random linear arrangements \citep{Gildea2007a,Alemany2022b}.
\end{itemize}
This raises the questions of what would the most appropriate baseline for research on dependency distance minimization be. $\ExpeDProj$ is by far the most widely used random baseline \citep{Gildea2007a,Liu2008a,Park2009a,Futrell2015a}. 

Since planarity is a weaker condition than projectivity, $\ExpeDPlan$ implies a gain in coverage. Accordingly, there are more planar sentences than projective sentences in real texts \citep[Table 1]{Havelka2007a,Gomez2010a} and also in artificially-generated syntactic dependency structures \citep[Figure 2]{Gomez2022a}. However, surprisingly, $\ExpeDPlan$ has never been used in research on the principle of dependency distance minimization. Here we aim to test the hypothesis that formal constraints mask the effects of the principle, a hypothesis that has already been confirmed on artificially-generated syntactic dependency structures \citep{Gomez2022a}.  

Since dependency distance naturally grows with sentence length \citep{Ferrer2014b,Ferrer2022a} and the manifestation of the principle depends on sentence length (the statistical bias towards shorter distances may disappear or become a bias in the opposite direction in short sentences \citealp{Ferrer2019b,Ferrer2022a}), we compare the actual dependency distances against the values predicted by the baselines in sentence of the same length. Given the natural growth of dependency distance as sentence length increases \citep{Ferrer2014b,Ferrer2022a}, we measure, for each sentence, the average dependency distance, namely $\left<d\right> = D(T)/(n - 1)$ instead of the raw total sum $D(T)$ (a sentence of $n$ vertices has $n - 1$ syntactic dependencies when the structure is a tree).   

\subsubsection{Data and methods}

As real datasets, we use the Parallel Universal Dependencies 2.6 collection \citep{UniversalDependencies26}. To control for annotation style, we consider two versions of the collection: the collection with its original content-head annotation (PUD) and its transformation into Surface-Syntactic Universal Dependencies 2.6 (hereafter PSUD). By doing so, we cover two major competing annotation styles \citep{Gerdes2018a}.

We borrow the preprocessing methods from previous research \citep{Ferrer2022a}. The main features of the processing is that nodes that are punctuation marks are removed and that the corpus remains fully parallel after the removal \citep{Ferrer2022a}. The preprocessed data is freely available as ancillary materials of the Linear Arrangement Library website.\footnote{Online at: \url{https://cqllab.upc.edu/lal/universal-dependencies/}}

With respect to previous accounts \citep{Havelka2007a, Gomez2010a, Ferrer2018a}, our collections exhibit some remarkable statistical differences. First, the proportion of projective and planar sentence is higher specially in PUD, where the proportion of non-projective or non-planar sentences does not exceed $10\%$ in most cases (Tables \ref{tab:application:coverage_PUD} and \ref{tab:application:coverage_PSUD}). This proportion increases in PSUD and in two exceptional languages, Chinese and Hindi, it becomes larger than $50\%$ (Tables \ref{tab:application:coverage_PSUD}). Second, the difference between the proportion of non-projective and non-planar sentences is smaller than in previous reports \citep{Gomez2010a,Havelka2007a}. Having said that, notice that our collections are fully parallel, and special care has been taken to keep annotation consistent across languages.

\begin{table}
	\centering
	\caption{Proportion (\%) of projective and planar sentences in the PUD collection. }
\begin{tabular}{c *{2}{d{3.3}} c *{2}{d{3.3}}}
\toprule
Language		& \mc{Projective}	& \mc{Planar}	& Language		& \mc{Projective}	& \mc{Planar} \\
\midrule
Arabic			& 96.2				& 96.3			& Italian		& 99.3				& 99.3 \\
Czech			& 89.6				& 89.8			& Japanese		& 99.7				& 99.7 \\
Chinese			& 99.4				& 99.4	 		& Korean		& 93.6				& 95.2 \\
German			& 86.3				& 86.7			& Polish		& 94.8				& 95.3 \\
English			& 95.5				& 95.9			& Portuguese	& 96.7				& 96.8 \\
Finnish			& 96.4				& 96.7			& Russian		& 97.6				& 98   \\
French			& 98.3				& 98.3			& Spanish		& 95.5				& 95.7 \\
Hindi			& 74.3				& 76.3			& Swedish		& 96.5				& 96.9 \\
Icelandic 		& 96.2				& 96.9			& Thai			& 97.2				& 97.2 \\
Indonesian		& 98.7				& 99			& Tukish		& 93.5				& 94.1 \\
\bottomrule
\end{tabular}
	\label{tab:application:coverage_PUD}
\end{table}
\begin{table}
	\centering
	\caption{Proportion (\%) of projective and planar sentences in the PSUD collection. }
\begin{tabular}{c *{2}{d{3.3}} c *{2}{d{3.3}}}
\toprule
Language		& \mc{Projective}	& \mc{Planar}	& Language		& \mc{Projective}	& \mc{Planar} \\
\midrule
Arabic			& 83.6				& 83.9			& Italian		& 94.5				& 94.6   \\
Czech			& 86.6				& 87.2			& Japanese		& 35.8				& 35.8   \\
Chinese			& 42				& 46.1			& Korean		& 75.8				& 77.1   \\
German			& 72.3				& 72.7			& Polish		& 88.2				& 89.7   \\
English			& 93.6				& 94.1			& Portuguese	& 87.3				& 87.7   \\
Finnish			& 88.8				& 89.4			& Russian		& 95.1				& 95.5   \\
French			& 90.5				& 90.6			& Spanish		& 80.2				& 80.9   \\
Hindi			& 43.6				& 44.3			& Swedish		& 93				& 93.7   \\
Icelandic		& 90.7				& 92			& Thai			& 85.6				& 86.8   \\
Indonesian		& 90.5				& 91.8			& Turkish		& 87.6				& 88.3   \\
\bottomrule
\end{tabular}
	\label{tab:application:coverage_PSUD}
\end{table}

Given formal constraint `*' (none, planarity and projectivity) and sentence length $n$, 
\begin{enumerate}
\item We calculate $\VD{\Rtree}$ for each $\Rtree$ and also calculate the expected sum of edge lengths under `*' different constraints (none, Equation \ref{eq:introduction:E_D}; planarity, Equation \ref{eq:introduction:E_pl_D:function_E_pr_D}; projectivity, Equation \ref{eq:introduction:E_pr_D}). 
\item Then, for each sentence, we divide each by $n-1$, to produce the mean length of its dependencies
\begin{equation*}
\langle d_* \rangle = \frac{\VD{}}{n - 1}
\end{equation*}
and the expected mean of length of its dependencies under some constraint `*'
\begin{equation*}
\expe{\langle d_* \rangle} = \frac{\eexpe{*}{\VD{}}}{n - 1}.
\end{equation*}

\item
Finally, we compute the average $\langle d_* \rangle$ and the average $\expe{\langle d_* \rangle}$ over all sentence of length $n$ satisfying constraint `*'.
\end{enumerate}

\subsubsection{Results}

\begin{figure}
	\centering
	\includegraphics[scale=0.88]{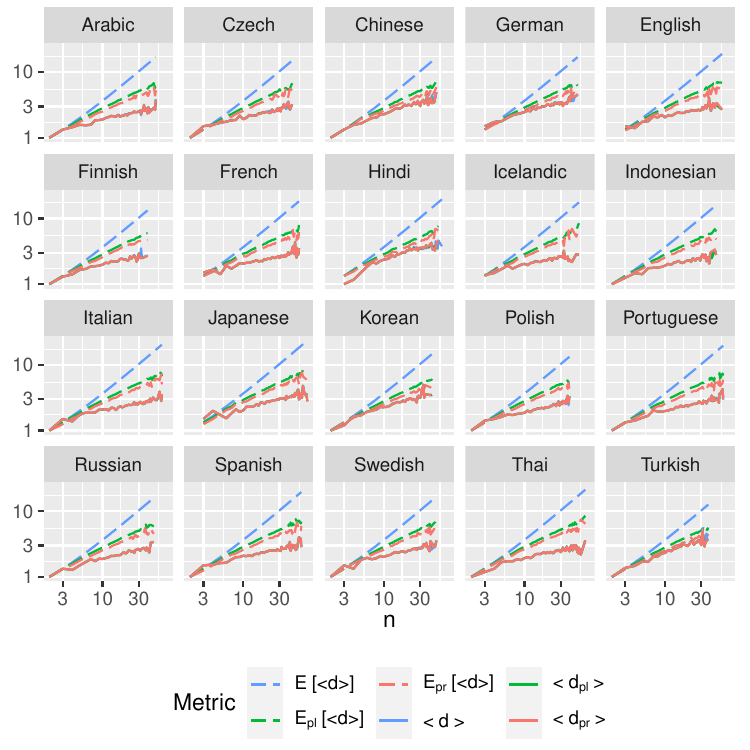}
	\caption{The scaling of $\langle d \rangle$, the mean dependency distance of a sentence as a function of sentence length ($n$) for languages in the PUD collection for formal constraints of increasing strength: none (blue), planarity (green) and projectivity (red). Lines indicate the average value over all sentences of the same length. Solid lines are used for real sentences and dashed lines are used for the corresponding random baseline. Solid lines overlap so much that only one of them can be seen in most cases.}
	\label{fig:application:baselines_PUD}
\end{figure}

\begin{figure}
	\centering
	\includegraphics[scale=0.88]{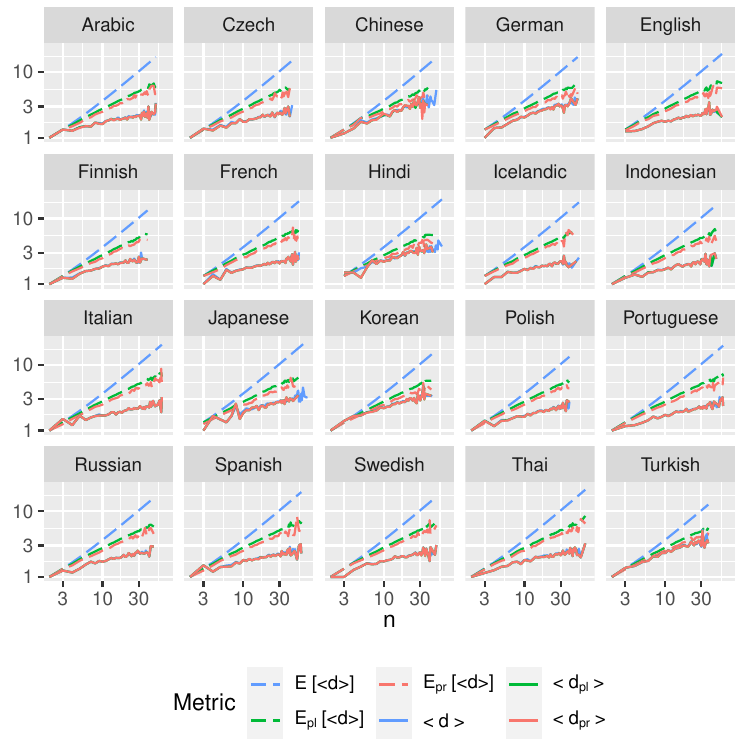}
	\caption{The scaling of $\langle d \rangle$, the mean dependency distance of a sentence as a function of sentence length ($n$) for languages in the PSUD collection for formal constraints of increasing strength. Format is the same as in Figure \ref{fig:application:baselines_PUD}. Again, solid lines overlap that only one of them can be seen in most cases.}
	\label{fig:application:baselines_PSUD}
\end{figure}

Figures \ref{fig:application:baselines_PUD} and \ref{fig:application:baselines_PSUD} show the scaling of mean dependency distance as a function of sentence length in real sentences and in their corresponding random baselines. Concerning the random baselines (dashed lines), we find that the stronger the formal constraint on syntactic dependency structures the lower the value of the random baseline. In contrast, the actual mean sentence length (solid lines) is practically the same independently of the formal constraint (none, planarity and projectivity). This is due to the fact the proportion of sentences that are lost by imposing some formal constraint is small in the PUD and PSUD collections. The overwhelming majority of sentences are planar and the proportion of planar sentences that are not projective is really small (Table \ref{tab:application:coverage_PUD} and \ref{tab:application:coverage_PSUD}). Thus, selecting sentences satisfying a certain formal constraint has a neglectable impact on the estimation of mean dependency distance.

Concerning the relationship between the actual mean dependency distance and the random baselines, we find that the average $\langle d \rangle$ is below the average value of the random baselines for sufficiently large $n$ in all languages. The only exception is Turkish, where the actual average $\langle d \rangle$ is just slightly below the average of the projective baseline (Figures \ref{fig:application:baselines_PUD} and \ref{fig:application:baselines_PSUD}).

These findings are consistent between PUD and PSUD, in spite of their differences in proportions of projective and planar sentences commented above.
\section{Conclusions and future work}
\label{sec:conclusions}

\subsection{Theory}

In Section \ref{sec:theory:counting_arrangements}, we have characterized planar arrangements of a given free tree $\Ftree$ using the concept of segment \citep{Alemany2022b}. Employing said characterization, we have shown that the number of planar arrangements of a free tree depends on its degree sequence (Proposition \ref{prop:theory:counting_arrangements:amount_planar}), in a similar way projective arrangements of a rooted tree do \citep{Alemany2022b}. Moreover, we have given a procedure to generate u.a.r. planar arrangements of a given free tree in Section \ref{sec:theory:generating_arrangements} (Algorithm \ref{algo:theory:generating_arrangements:planar:uar}) which can be easily adapted to generate such arrangements exhaustively. Interestingly, our algorithm to generate planar arrangements is based on the generation of projective arrangements of a rooted subtree. For the sake of completeness, we have detailed a procedure to generate u.a.r. projective arrangements of a given rooted tree (Algorithm \ref{algo:theory:generating_arrangements:projective:uar}). 

\subsection{Applications}

The identification of the underlying structure of planar arrangements have led us to derive an arithmetic expression, in Section \ref{sec:theory:E_pl_D}, for $\ExpeDPlan$ (Theorem \ref{thm:introduction:E_pl_D}) from which we devised a $\bigO{n}$-time algorithm to calculate such value (Proposition \ref{prop:theory:counting_arrangements:amount_planar}, Algorithm \ref{algo:applications:algorithm}). 

In Section \ref{sec:applications}, we have applied the theory developed so far to investigate the effect of formal constraints of increasing strength (none, planarity, projectivity) in a parallel collection and reported two main findings. First, the average dependency distance in real sentences remains practically the same as the strength of the formal constraint increases. We believe that this result stems from the high proportion of planar sentences (and the very low proportion of planar sentences that are not projective) of the PUD collection. Higher proportions of non-planar sentences have been reported in other collections \citep{Gomez2017a}. Second, the tendency of the random baseline to have a smaller value in stronger formal constraints. Critically, this phenomenon indicates that the strength of the dependency distance minimization effect depends on the choice of the formal constraint for the random baseline. As these formal constraints may be a side-effect of dependency distance minimization \citep{Ferrer2006a,Gomez2017a,Gomez2022a,Yadav2022a}, this phenomenon suggests that 
\begin{enumerate}
\item Formal constraints absorb the dependency distance effect. 
\item A fairer evaluation of the actual degree of optimization of dependency distances or a more accurate measurement of the power of the effect of dependency distance minimization requires considering not only the magnitude of the effect with respect some random baseline but also the formal constraint, as the latter may hide part of the dependency distance minimization effect.
\end{enumerate}
 
In past research on syntactic dependency distance minimization, $\ExpeDProj$ has been the most widely used random baseline \citep{Gildea2007a,Liu2008a,Park2009a,Futrell2015a}. However, projectivity has a lower coverage than planarity in real sentences \citep{Havelka2007a,Gomez2010a}. Projectivity is at risk of underestimating the strength of the dependency distance minimizaton principle \citep{Ferrer2004a} because of the significant reduction in the value of the random baseline (Figures \ref{fig:application:baselines_PUD} and \ref{fig:application:baselines_PSUD}) or the reduction of the actual dependency distances \citep[Figure 2]{Gomez2022a} that it introduces. Thanks to the research in this article, we have paved the way for replicating past research replacing $\ExpeDProj$ with $\ExpeDPlan$.

\subsection{Future work}

Planarity is a relaxation of projectivity but future work should address the problem of the expected value of $\VD{\Ftree}$ in classes of formal constraints with even more coverage \citep{Ferrer2018a}. A promising step is the investigation of $\ExpeDk$, the expected value of $\VD{\Ftree}$ conditioned to arrangements $\arr$ such that $\Cr{\Ftree}\le k$, that is, in arrangements such that the number of edge crossings is at most $k$. Notice that $\ExpeDk[0]=\ExpeDPlan$. In real languages, the average number of crossings ranges between $0.40$ and $0.62$ \citep{Ferrer2018a}, suggesting that $\ExpeDk$ with $k = 1$ or a small $k$ would suffice.


\appendix
\section{Derivation of $\rexpefixed{\Vcoanchor{uv}}{u}$}
\label{appendix:derivation_E_pr_beta_fixed}

Here we derive the expected length of the coanchor of a (directed) edge $uv\in E(\Rtree[u])$ in uniformly random projective arrangements of $\Rtree[u]$ conditioned to $\arr(u)=1$. Following \citet{Alemany2022b}, we decompose the length of the coanchor of the (directed) edge $uv$, $\Vcoanchor{uv}$, as the sum of the lengths of the segments in-between $u$ and $v$ (Figure \ref{fig:theory:preliminaries:anchor_coanchor}). Here we use $k_{uv}$ to denote the number of segments in-between $u$ and $v$, and $\Vlengthsegment{uv}{i}$ to denote the size of the $i$th segment, yielding \citep{Alemany2022b},
\begin{equation*}
\Vcoanchor{uv} = \sum_{i=1}^{k_{uv}} \Vlengthsegment{uv}{i}.
\end{equation*}
By the Law of Total Expectation, we have that
\begin{equation}
\label{eq:appendix:derivation_E_pr_beta_fixed:decomposition_into_segments}
\rexpefixed{\Vcoanchor{uv}}{u}
	=
	\sum_{k=1}^{\degree{u} - 1}
		\rexpefixed{\Vcoanchor{uv}}{u, k_{uv} = k}
		\rprobfixed{k_{uv} = k}{u},
\end{equation}
where $\rexpefixed{\Vcoanchor{uv}}{u, k_{uv} = k}$ is the expectation of $\Vcoanchor{uv}$ given that $u$ is the root of the tree (fixed at the leftmost position), and that $u$ and $v$ are separated by $k$ segments, and $\rprobfixed{k_{uv} = k}{u}$ is the probability that $u$ and $v$ are separated by $k$ intermediate segments, both in uniformly random projective arrangements $\arr$ conditioned to $\arr(u)=1$, both conditioned to the root of the tree being vertex $u$. On the one hand,
\begin{equation}
\label{eq:appendix:derivation_E_pr_beta_fixed:conditioned_value}
\rexpefixed{\Vcoanchor{uv}}{u, k_{uv} = k}
    = \rexpefixed{\sum_{i=1}^{k} \Vlengthsegment{uv}{i}}{u}
    = \frac{n - \Nvert{u}{v} - 1}{\degree{u} - 1} k.
\end{equation}
Notice that this is the same result as that obtained in \citep{Alemany2022b}. Lastly, the proportion of arrangements in which the segment of $v$ is at position $k_{uv}+1$ equals $(\degree{u} - 1)!$, therefore, 
\begin{equation}
\label{eq:appendix:derivation_E_pr_beta_fixed:probability_s_segments}
\rprobfixed{k_{uv} = k}{u}
    = \frac
        {(\degree{u} - 1)!	\prod_{v\in\neighs{u}} \NProj[\Rtree[u]] }
        {\degree{u}!		\prod_{v\in\neighs{u}} \NProj[\Rtree[u]] }
    = \frac{1}{\degree{u}}.
\end{equation}
Recalling that \citep{Alemany2022b}
\begin{equation*}
\rcondexpe{\Vcoanchor{uv}}{u} = \frac{\Nvert{u}{u} - \Nvert{u}{v} - 1}{3},
\end{equation*}
and plugging the results in Equations \ref{eq:appendix:derivation_E_pr_beta_fixed:conditioned_value} and \ref{eq:appendix:derivation_E_pr_beta_fixed:probability_s_segments} into Equation \ref{eq:appendix:derivation_E_pr_beta_fixed:decomposition_into_segments} we get
\begin{equation*}
\rexpefixed{\Vcoanchor{uv}}{u}
    = \frac{n - \Nvert{u}{v} - 1}{\degree{u} - 1} \frac{1}{\degree{u}} \sum_{k=1}^{\degree{u}-1} k
    = \frac{\Nvert{u}{u} - \Nvert{u}{v} - 1}{2}
    = \frac{3}{2}\rcondexpe{\Vcoanchor{uv}}{u}.
\end{equation*}



\end{document}